\newtheorem{theorem}{Theorem}[section]
\newtheorem{proposition}[theorem]{Proposition}
\newtheorem{definition}[theorem]{Definition}
\newtheorem{lemma}[theorem]{Lemma}
\newtheorem{remarks}[theorem]{Remarks}
\newtheorem{problem}[theorem]{Problem Statement}
\newcommand{\PA}{\mathcal{P}}
\newcommand{\LL}{\mathcal{L}}
\newcommand{\BB}{\mathcal{B}}
\newcommand{\RR}{\mathcal{R}}
\newcommand{\MM}{\mathcal{T}}
\newcommand{\TT}{\mathcal{T}}
\newcommand{\nat}{\mathbb{N}}
\newcommand{\CC}{\mathcal{C}}
\newcommand{\real}{{\mathbb{R}}}
\newcommand{\Next}{\mathbf{X}}
\newcommand{\Always}{\mathbf{G}}
\newcommand{\Event}{\mathbf{F}}
\newcommand{\Until}{\mathcal{U}}
\newcommand{\SBP}{\textsc{Shortest-Bot-Path}}
\newcommand{\SP}{\textsc{Shortest-Path}}
\newcommand{\prop}{\alpha}
\newcommand{\opt}{\pi}
\newcommand\oprocendsymbol{\hbox{$\square$}}
\newcommand\oprocend{\relax\ifmmode\else\unskip\hfill\fi\oprocendsymbol}
\title{Optimal Path Planning under Temporal Logic Constraints}
\author{Stephen L. Smith \quad Jana T\r{u}mov\'{a} \quad Calin Belta
  \quad Daniela Rus\thanks{This material is based upon work supported
    in part by ONR-MURI Award N00014-09-1-1051 and ARO Award W911NF-09-1-0088.}
  \thanks{S. L. Smith and D. Rus are with the Computer Science and
    Artificial Intelligence Laboratory, Massachusetts Institute of
    Technology, Cambridge, MA 02139 (\smith; \rus).  J.
    T\r{u}mov\'{a} and C. Belta are with the Department of Mechanical
    Engineering, Boston University, Boston, MA 02215
    (\tumovabu;\belta).  J.  T\r{u}mov\'{a} is also affiliated with
    Faculty of Informatics, Masaryk University, Brno, Czech
    Republic.}}
\begin{document}
\maketitle
 \begin{abstract}

   In this paper we present a method for automatically generating
   optimal robot trajectories satisfying high level mission
   specifications.  The motion of the robot in the environment is
   modeled as a general transition system, enhanced with weighted
   transitions. The mission is specified by a general linear temporal
   logic formula.  In addition, we require that an \emph{optimizing}
   proposition must be repeatedly satisfied.  The cost function that
   we seek to minimize is the maximum time between satisfying
   instances of the optimizing proposition.  For every environment
   model, and for every formula, our method computes a robot
   trajectory which minimizes the cost function.

   The problem is motivated by applications in robotic monitoring and
   data gathering.  In this setting, the optimizing proposition is
   satisfied at all locations where data can be uploaded, and the
   entire formula specifies a complex (and infinite horizon) data
   collection mission.  Our method utilizes B\"uchi automata to
   produce an automaton (which can be thought of as a graph) whose
   runs satisfy the temporal logic specification.  We then present a
   graph algorithm which computes a path corresponding to the optimal
   robot trajectory.  We also present an implementation for a robot
   performing a data gathering mission in a road network.
\end{abstract}

\section{Introduction}

The goal of this paper is to plan the optimal motion of a robot
subject to temporal logic constraints.  This is an important problem
in many applications where the robot has to perform a sequence of
operations subject to external constraints.  For example, in a
persistent data gathering task the robot is tasked to gather data at
several locations and then visit a different set of upload sites to
transmit the data. Referring to~Fig.~\ref{fig:road_network}, we would
like to enable tasks such as ``Repeatedly gather data at locations P1,
P4, and P5. Upload data at either P2 or P3 after each data-gather.
Follow the road rules, and avoid the road connecting I4 to I2.''  We
wish to determine robot motion that completes the task, and minimizes
a cost function, such as the maximum time between data uploads.

Recently there has been an increased interest in using temporal logic
to specify mission plans for
robots~\cite{Antoniotti95,Loizou04,Quottrup04,
  CB-VI-GJP:04,GEF-HKG-GJP:05,Hadas-ICRA07,Tok-Ufuk-Murray-CDC09}.
Temporal logic is appealing because it provides a formal high level
language in which to describe a complex mission.  In addition, tools
from model checking~\cite{VW86,Holzmann97,Clarke99,DiVinE} can be used
to verify the existence of a robot trajectory satisfying the
specification, and can produce a satisfying trajectory. However,
frequently there are multiple robot trajectories that satisfy a given
specification.  In this case, one would like to choose the ``optimal''
trajectory according to a cost function.  The current tools from model
checking do not provide a method for doing this.  In this paper we
consider linear temporal logic specifications, and a particular form
of cost function, and provide a method for computing optimal
trajectories.

\begin{figure}
\centering
\includegraphics[width=0.8\linewidth]{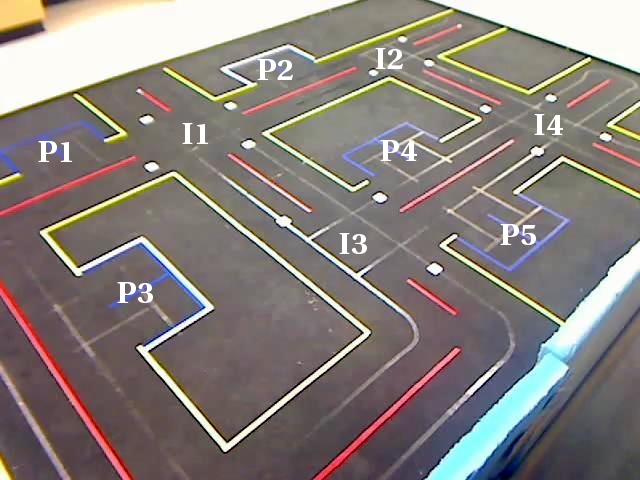}
\caption{An environment consisting of roads, intersections and parking
  lots.  An example mission in the environment is ``Repeatedly gather
  data at locations P1, P4, and P5. Upload data at either P2 or P3
  after each data-gather.  Follow the road rules, and avoid the road
  connecting I4 to I2.''}
\label{fig:road_network}
\end{figure}

The problem considered in this paper is related to the vehicle routing
problem (VRP)~\cite{PT-DV:01}.  The VRP is a generalization of the
traveling salesman problem (TSP) in which the goal is to plan routes
for vehicles to service customers.  Vehicle routing extends the TSP by
considering aspects such as multiple vehicles, vehicles with capacity
constraints, and vehicles that must depart and return to specified
depot locations.  In~\cite{SK-EF:08}, the authors consider a vehicle
routing problem with metric temporal logic constraints. The goal is to
minimize a cost function of the vehicle paths (such as total distance
traveled).  The authors present a method for computing an optimal
solution by converting the problem to a mixed integer linear program
(MILP).  However, their method only applies to specifications where
the temporal operators are applied only to atomic propositions.  Thus,
the method does not apply to persistent monitoring and data gathering
problems, which have specifications of the form ``always eventually.''
In addition, the approach that we present in this paper leads to an
optimization problem on a graph, rather than a MILP.

The contribution of this paper is to present a cost function for which
we can determine an optimal robot trajectory that satisfies a general
linear temporal logic formula.  The cost function is motivated by
problems in monitoring and data gathering, and it seeks to minimize
the time between satisfying instances of a single \emph{optimizing
  proposition}.  Our solution, summarized in the \textsc{Optimal-Run}
algorithm of Section~\ref{sec:solution}, operates as follows.  We
represent the robot and environment as a weighted transition system.
Then, we convert the linear temporal logic specification to a B\"uchi
automaton.  We synchronize the transition system with the B\"uchi
automaton creating a product automaton.  In this automaton a
satisfying run is any run which visits a set of accepting state
infinitely often.  We show that there exists an optimal run that is in
``prefix-suffix'' structure, implying that we can search for runs with
a finite transient, followed by a periodic steady-state.  Thus, we
create a polynomial time graph algorithm based on solutions of
bottleneck shortest path problems to find an optimal cycle containing
an accepting state.  We implement our solution on the physical testbed
shown in Fig.~\ref{fig:road_network}. We believe that optimizations of
this type may be useful for a broader class of problems than the one
considered here.

For simplicity of the presentation, we assume that the robot moves
among the vertices of an environment modeled as a graph. However, by
using feedback controllers for facet reachability and invariance in
polytopes \cite{HS04,HabColSchup06,Belta-TAC06} the method developed
in this paper can be easily applied for motion planning and control of
a robot with ``realistic'' continuous dynamics ({\it e.g.,} unicycle)
traversing an environment partitioned using popular partitioning
schemes such as triangulations and rectangular partitions.

The organization of the paper is as follows.  In
Section~\ref{sec:prelim} we present preliminary results in temporal
logic.  In Section~\ref{sec:problem_stat} we formally state the robot
motion planning problem, and in Section~\ref{sec:solution} we present
our solution.  In Section~\ref{sec:experiment} we present results from
a motion planning experiment for one robot in a road network
environment.  Finally in Section~\ref{sec:conc} we present some
promising future directions.

\section{Preliminaries}
\label{sec:prelim}

In this section we briefly review some aspects of linear temporal
logic (LTL).  LTL considers a finite set of variables $\Pi$, each of
which can be either true or false.  The variables $\prop_i \in \Pi$ are
called \emph{atomic propositions}. In the context of robots,
propositions can capture properties such as ``the robot is located in
region $1$'', or ``the robot is recharging.''

Given a system model, LTL allows us to express the time evolution of
the state of the system. We consider a type of finite model called the
\emph{weighted transition system}.

\begin{definition}[Weighted Transition System]
\label{def:transition system}
A weighted transition system is a tuple $\MM:=(Q,q_0,R,\Pi,\LL,w)$,
consisting of %
(i) a finite set of states $Q$; %
(ii) an initial state $q_0\in Q$; %
(iii) a transition relation $R\subseteq Q\times Q$; %
(iv) a set of atomic propositions $\Pi$; %
(v) a labeling function $\LL:Q \to 2^{\Pi}$; %
(vi) a weight function $w:R\to \real_{>0}$.%
\end{definition}
We assume that the transition system is non-blocking, implying that
there is a transition from each state.  The transition relation has
the expected definition: given that the system is in state $q_1\in Q$
at time $t_1$, the system is in state $q_2$ at time
$t_1+w\big((q_1,q_2))$ if and only if $(q_1,q_2) \in R$.  The labeling
function defines for each state $q\in Q$, the set $\LL(q)$ of all
atomic propositions valid in $q$.  For example, the proposition ``the
robot is recharging'' will be valid for all states $q\in Q$ containing
recharging stations.

For our transition system we can define a \emph{run} $r_\MM$ to be an
infinite sequence of states $q_0q_1q_2\ldots$ such that $q_0 \in Q_0$,
$q_i \in Q$, for all $i$, and $(q_i,q_{i+1}) \in R$, for all $i$.  A
run $r_\MM$ defines a \emph{word} $\LL(q_0)\LL(q_1)\LL(q_2)\ldots$
consisting of sets of atomic propositions valid at each state.  An
example of a weighted transition system is given in Fig.~\ref{fig:ts}.

\begin{figure}
\begin{center}
\scalebox{0.42}{
\input{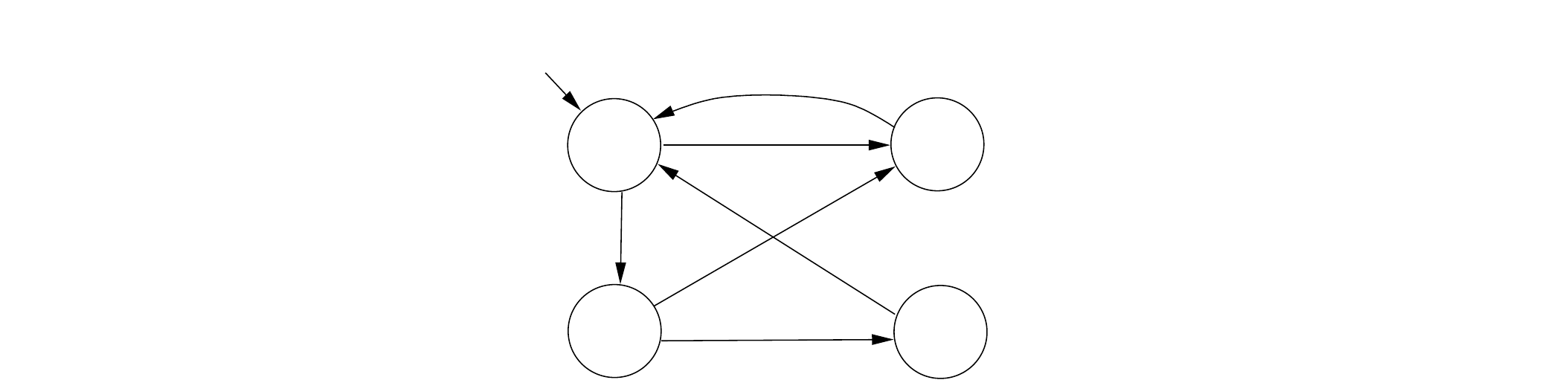_t}
}
\caption{An example of a weighted transition system. A correct run of
  the system is for instance $q_0q_2q_1q_0q_2q_3q_0\ldots$, producing
  the word
  $\emptyset\{\text{gather}\}\{\text{upload}\}\emptyset\{\text{gather}\}\{\text{upload,recharge}\}\emptyset\ldots$.}
\label{fig:ts}
\end{center}
\end{figure}

\begin{definition}[Formula of LTL]
  An LTL formula $\phi$ over the atomic propositions $\Pi$ is defined
  inductively as follows:
$$\phi ::= \top \mid \prop \mid \phi \lor
   \phi \mid \lnot\,\phi \mid \Next\,\phi \mid \phi\,\Until\,\phi$$
 where $\top$ is a predicate true in each state of a system, $\prop \in
  \Pi$ is an atomic proposition, $\neg$ (negation) and $\vee$
  (disjunction) are standard Boolean connectives, and $\Next$ and
  $\Until$ are temporal operators.
\end{definition}

LTL formulas are interpreted over infinite runs, as those generated by
the transition system $\MM$ from Def.~\ref{def:transition system}.
Informally, $\Next\,\prop$ states that at the next state of a run,
proposition $\prop$ is true (i.e., $\prop \in \LL(q_1)$).  In
contrast, $\prop_1\,\Until\,\prop_2$ states that there is a future
moment when proposition $\prop_2$ is true, and proposition $\prop_1$
is true at least until $\prop_2$ is true.  From these temporal
operators we can construct two other useful operators Eventually
(i.e., future), $\Event$ defined as $\Event\,\phi := \top\,\Until\,
\phi$, and Always (i.e., globally), $\Always$, defined as
$\Always\,\phi := \lnot\,\Event\,\lnot\,\phi$.  The formula
$\Always\,\prop$ states that proposition $\prop$ holds at all states
of the run, and $\Event\,\prop$ states that $\prop$ holds at some
future time instance.

An LTL formula can be represented in an automata-theoretic setting as
\emph{B\"uchi automaton}, defined as follows:
\begin{definition}[B\"uchi Automaton]
  A B\"uchi automaton is a tuple $\BB :=
  (S,S_0,\Sigma,\delta,F)$, consisting of %
(i) a finite set of states $S$; %
(ii) a set of initial states $S_0\subseteq S$; %
(iii) an input alphabet $\Sigma$; %
(iv) a non-deterministic transition relation $\delta \subseteq
  S\times \Sigma \times S$; %
(v) a set of accepting (final) states $F\subseteq S$.
\end{definition}

The semantics of B\"uchi automata are defined over infinite input
words. Setting the input alphabet $\Sigma = 2^\Pi$, the semantics are
defined over the words consisting of sets of atomic propositions,
i.e. those produced by a run of the transition system. Let
$\omega=\omega_0\omega_1\omega_2\ldots$ be an infinite input word of
automaton $\BB$, where $\omega_i\in\Sigma$ for each $i\in \nat$ (for
example, the input $\omega=\LL(q_0)\LL(q_1)\LL(q_2)\ldots$ could be a word
produced by a run $q_0q_1q_2\ldots$ of the transition system $\MM$).

A \emph{run} of the B\"uchi automaton \emph{over} an input word
$\omega=\omega_0\omega_1\omega_2\ldots$ is a sequence
$r_\BB=s_0s_1s_2\ldots$, such that $s_0 \in S_0$, and
$(s_i,\omega_i,s_{i+1}) \in \delta$, for all $i\in \nat$.

\begin{definition}[B\"uchi Acceptance]
  A word $\omega$ is accepted by the B\"uchi automaton $\BB$ if and
  only if there exists $r_\BB$ over $\omega$ so that $\inf(r_\BB) \cap
  F \neq \emptyset$, where $\inf(r_\BB)$ denotes the set of states
  appearing infinitely often in run $r_\BB$.
\end{definition}

The B\"uchi automaton allows us to determine whether or not the word
produced by a run of the transition system satisfies an LTL formula.
More precisely, for any LTL formula $\phi$ over a set of atomic
propositions $\Pi$, there exists a B\"uchi automaton $\BB_{\phi}$ with
input alphabet $2^{\Pi}$ accepting all and only the infinite words
satisfying formula $\phi$~\cite{VW86}.  Translation algorithms were
proposed in~\cite{Vardi94} and efficient implementations were
developed in~\cite{Gerth95,Gastin01}. The size of the obtained B\"uchi automaton is, in general, exponential with respect to the size of the formula. However, the exponential complexity is in practice not restrictive as the LTL formulas are typically quite small.
An example of a B\"uchi automaton is given in Figure~\ref{fig:ba}.

\begin{figure}
\begin{center}
\scalebox{0.42}{
\input{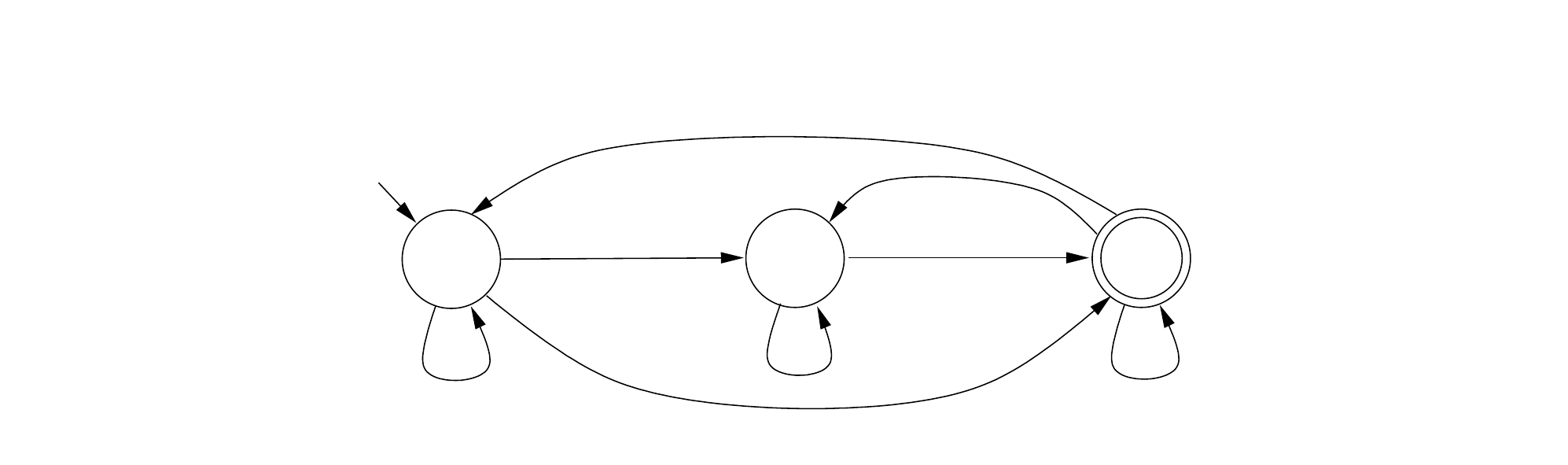_t}
}
\caption{A B\"uchi automaton corresponding to LTL formula
  $(\Always\,\Event\,\mathrm{gather} \wedge
  \Always\,\Event\,\mathrm{upload})$ over the alphabet $\Pi$. The
  illustration of the automaton is simplified. In fact, each
  transition labeled with $\top$ represents $|2^\Pi|$ transitions
  labeled with all different subsets of atomic
  propositions. Similarly, a transition labeled with $\mathrm{gather}$
  represent $|2^\Pi|/2$ transitions labeled with all subsets of atomic
  propositions containing the proposition $\mathrm{gather}$,
  etc.}
\label{fig:ba}
\end{center}
\end{figure}

\section{Problem Statement and Approach}
\label{sec:problem_stat}

Consider a single robot in an arbitrary environment, represented as a
transition system (as defined in Section~\ref{sec:prelim})
$\mathcal{T} = (Q,q_0,R,\Pi,\LL,w)$.  A run in the transition system
starting at $q_0$ defines a corresponding trajectory of the robot in
the environment.  The time to take transition $(q_1,q_2)\in R$ (i.e.,
the time for the robot to travel from $q_1$ to $q_2$ in the
environment) is given by $w(q_1,q_2)$.

To define our problem, we assume that there is an atomic proposition
$\opt \in \Pi$, called the \emph{optimizing proposition}. We consider
LTL formulas of the form
\begin{equation}
\label{eq:temp_spec}
\phi:=\varphi \land \Always\,\Event\, \opt.
\end{equation}
The formula $\varphi$ can be any LTL formula over $\Pi$.  The second
part of the formula specifies that the proposition $\opt$ must be
satisfied infinitely often, and will simply ensure well-posedness of
our optimization.  

Let each run of $\MM$ start at time $t=0$, and assume that there is at
least one run satisfying LTL formula~\eqref{eq:temp_spec}.  For each
satisfying run $r_\MM=q_0q_1q_2\ldots$, there is a corresponding word
of sets of atomic propositions $\omega =
\omega_0\omega_1\omega_2\ldots$, where $\omega_i = \LL(q_i)$.
Associated with $r_\MM$ there is a sequence of time instances
$\mathbb{T}:=t_0,t_1,t_2,\ldots$, where $t_0 = 0$, and $t_i$ denotes
the time at which state $q_i$ is reached ($t_{i+1} = t_i +
w(q_i,q_{i+1})$).  From this time sequence we can extract all time
instances at which the proposition $\opt$ is satisfied.  We let
$\mathbb{T}_{\opt}$ denote the sequence of satisfying instances of the
proposition $\opt$.

Our goal is to synthesize an infinite run $r_\MM$ (i.e., a robot
trajectory) satisfying LTL formula~\eqref{eq:temp_spec}, and
minimizing the cost function
\begin{equation}
\label{eq:cost_fn2}
\CC(r_\MM)=\limsup_{i\to+\infty}\left(\mathbb{T}_{\opt}(i+1) -
  \mathbb{T}_{\opt}(i)\right),
\end{equation}
where $\mathbb{T}_{\opt}(i)$ is the $i$th satisfying time instance of
proposition $\opt$.  Note that a finite cost in~\eqref{eq:cost_fn2}
enforces that $\Always\,\Event\, \opt$ is satisfied. Thus, the
specification appears in $\phi$ merely to ensure that any satisfying
run has finite cost.  In summary, our goal is the following:

\begin{problem}
\label{problem:1} 
Determine an algorithm that takes as input a weighted transition
system $\MM$, an LTL formula $\phi$ in form \eqref{eq:temp_spec}, and
an optimizing proposition $\opt$, and outputs a run $r_\MM$ minimizing
the cost $\CC(r_\MM)$ in~\eqref{eq:cost_fn2}.
\end{problem}

We now make a few remarks, motivating this
problem.
\begin{remarks}[Comments on problem statement]
  \emph{Cost function form:} The transition system produces infinite
  runs.  Thus, cost function~\eqref{eq:cost_fn2} evaluates the
  steady-state time between satisfying instances of $\opt$.  In the
  upcoming sections we design an algorithm that produces runs which
  reach steady-state in \emph{finite time}.  Thus, the runs produced
  will achieve the cost in~\eqref{eq:cost_fn2} in finite time.
  
  \emph{Expressivity of LTL formula~\eqref{eq:temp_spec}:} Many
  interesting LTL specifications can be cast in the form
  of~\eqref{eq:temp_spec}.  For example, suppose that we want to
  minimize the time between satisfying instances of a disjunction of
  propositions $\vee_{i} \prop_i$.  We can write this in the
  formula~\eqref{eq:temp_spec} by defining a new proposition $\opt$ which
  is satisfied at each state in which a $\prop_i$ is satisfied.

In addition, the LTL formula $\varphi$ in (\ref{eq:temp_spec}) allows
us to specify various rich robot motion requirements. An example of
such is \emph{global absence} ($\Always \, \neg \psi$, globally keep
avoiding $\psi$), \emph{response} ($\Always \, (\psi_1 \Rightarrow
\Event \, \psi_2)$, whenever $\psi_1$ holds true, $\psi_2$ will happen
in future), \emph{reactivity} ($\Always \, \Event \, \psi_1
\Rightarrow \Always \, \Event \, \psi_2$, if $\psi_1$ holds in future
for any time point, $\psi_2$ has to happen in future for any time
point as well), \emph{sequencing} ($\psi_1 \, \Until \, \psi_2 \,
\Until \, \psi_3$, $\psi_1$ holds until $\psi_2$ happens, which holds
until $\psi_3$ happens), and many others. For concrete examples, see
Section~\ref{sec:experiment}. \oprocend
\end{remarks}

\section{Problem Solution}
\label{sec:solution}

In this section we describe our solution to Problem~\ref{problem:1}.
We leverage ideas from the automata-theoretic approach to model
checking.

\subsection{The Product Automaton}

Consider the weighted transition system $\MM=(Q,q_0,R,\Pi,\LL,w)$, and
a proposition $\opt \in\Pi$.  In addition, consider an LTL formula
$\phi=\varphi \land \Always\,\Event\,\opt $ over $\Pi$ in
form~(\ref{eq:temp_spec}), translated into a B\"uchi automaton
$\BB_\phi=(S,S_0,2^\Pi,\delta,F)$.  
With these two components, we define a new object, which we call the
\emph{product automaton}, that is suitably defined for our problem.

\begin{definition}[Product Automaton]
  The product automaton $\PA = \MM \times \BB_\phi$ between the
  transition system $\MM$ and the B\"uchi automaton $\BB_\phi$ is
  defined as the tuple $\PA :=
  (S_{\PA},S_{\PA,0},\delta_{\PA},F_{\PA},w_\PA,S_{\PA,\opt})$,
  consisting of
  \begin{enumerate}
  \item a finite set of states $S_{\PA}= Q \times S$,
  \item a set of initial states $S_{\PA,0}=\{q_0\}\times S_0$,
 \item a transition relation $\delta_{\PA} \subseteq S_{\PA}\times
    S_{\PA}$, where $\big((q,s),(\bar q,\bar s)\big)\in \delta_{\PA}$
    if and only if $(q,\bar q)\in R$ and $(s,\LL(q),\bar s) \in \delta$.
\item a set of accepting (final) states $F_{\PA} = Q\times F$.
\item a weight function $w_\PA: \delta_\PA \to \real_{>0}$, where
  $w_\PA\big((q,s),(\bar q,\bar s)\big) = w(q,\bar q)$, for all
  $\big((q,s),(\bar q,\bar s)\big)\in \delta_\PA$.
\item a set of states $S_{\PA,\opt}\subseteq S_\PA$ in which the
  proposition $\opt$ holds true. Thus, $(q,s)\in S_{\PA,\opt}$ if and
  only if $\opt \in \LL(q)$.
  \end{enumerate}
\end{definition}

The product automaton (as defined above) can be seen as a B\"uchi
automaton with a trivial input alphabet.  Since the alphabet is
trivial, we omit it.  Thus, we say that a run $r_\PA$ in product
automaton $\PA$ is accepting if $\inf(r_\PA)\cap F_\PA \neq
\emptyset$.  An example product automaton is illustrated in
Fig.~\ref{fig:product}.

\begin{figure}
\begin{center}
\scalebox{0.42}{
\input{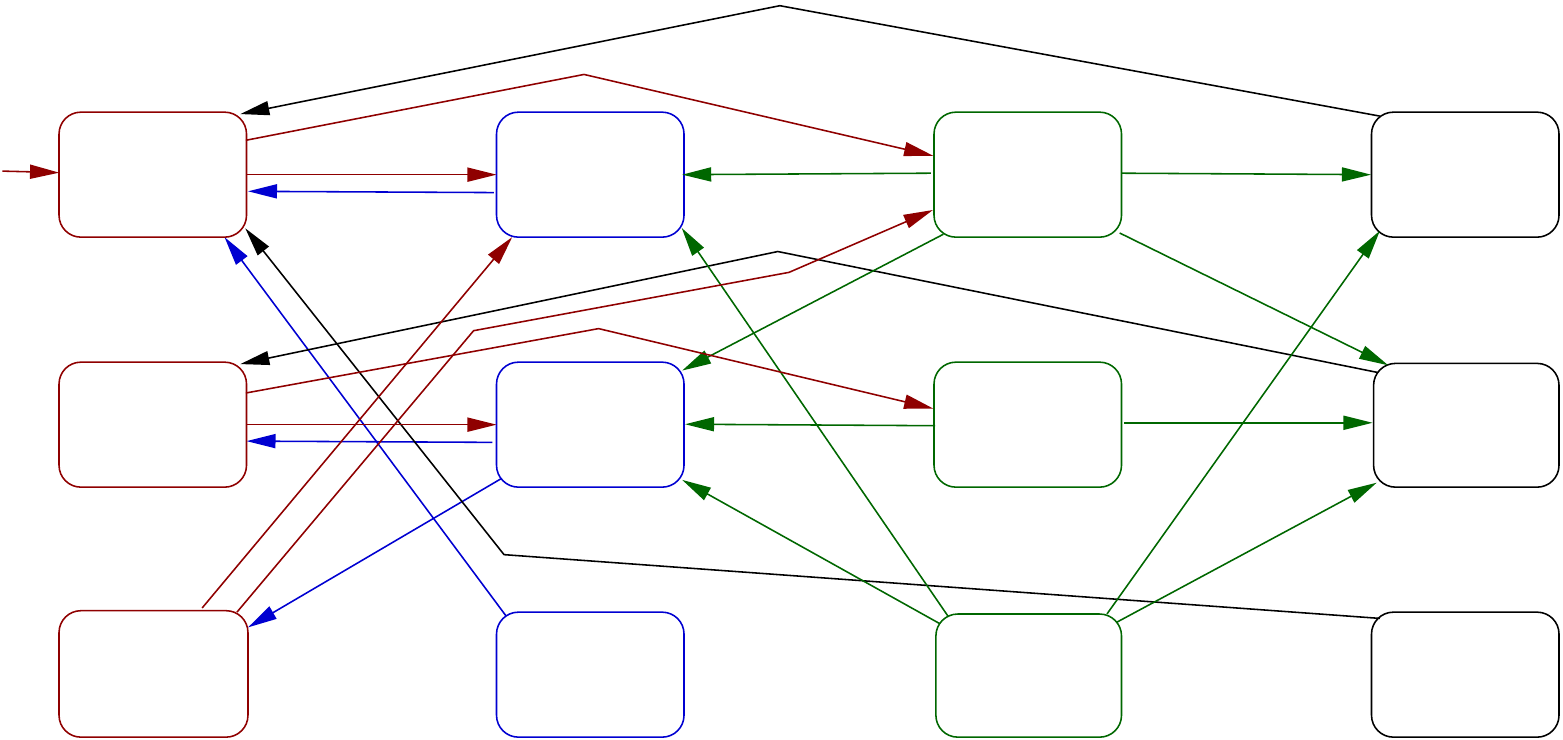_t}
}
\caption{Product automaton between the transition system in Figure \ref{fig:ts} and the B\"uchi automaton in Figure \ref{fig:ba}.}
\label{fig:product}
\end{center}
\end{figure}

As in the transition system, we associate with each run
$r_\PA=p_0p_1p_2\ldots$, a sequence of time instances
$\mathbb{T}_\PA:=t_0t_1t_2\ldots$, where $t_0 = 0$, and $t_i$ denotes
the time at which the $i$th vertex in the run is reached ($t_{i+1} =
t_i + w_\PA(p_i,p_{i+1})$).  From this time sequence we can extract a
sequence $\mathbb{T}_{\PA,\opt}$, containing time instances $t_i$,
where $p_i \in S_{\PA,\opt}$ (i.e. $\mathbb{T}_{\PA,\opt}$ is a sequence
of satisfying instances of the optimizing proposition $\opt$ in $\MM$).
The cost of a run $r_\PA$ on the product automaton $\PA$ (which
corresponds to cost function~\eqref{eq:cost_fn2} on transition system
$\MM$) is
\begin{equation}
\label{eq:cost_fn_prod}
\CC_\PA(r_\PA)=\limsup_{i\to+\infty} \left(\mathbb{T}_{\PA,\opt}(i+1) -
  \mathbb{T}_{\PA,\opt}(i)\right).
\end{equation}

The product automaton can also be viewed as a weighted graph, where
the states define vertices of the graph and the transitions define the
edges. Thus, we at times refer to runs of the product automaton as
{\em paths}. A \emph{finite path} is then a finite fragment of an
infinite path.

Each accepting run of the product automaton can be projected to a run
of the transition system satisfying the LTL formula.  Formally, we
have the following.
\begin{proposition}[Product Run Projection,~\cite{VW86}]
\label{prop:proj}
  For any accepting run $r_{\PA}=(q_0,s_0)(q_1,s_1)(q_2,s_2)\ldots$ of
  the product automaton $\PA$, the sequence $r_\MM=q_0q_1q_2\ldots$ is
  a run of $\MM$ satisfying $\phi$.  Furthermore, the values of cost
  functions $\CC_\PA$ and $\CC$ are equal for runs $r_\PA$ and $r_\TT$,
  respectively.

  Similarly, if $r_\MM = q_0q_1q_2\ldots$ is a run of $\MM$ satisfying
  $\phi$, then there exists an accepting run
  $r_{\PA}=(q_0,s_0)(q_1,s_1)(q_2,s_2)\ldots$ of the product automaton
  $\PA$, such that the values of cost functions $\CC$ and $\CC_\PA$
  are equal.
\end{proposition}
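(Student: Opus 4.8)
The plan is to prove the two directions separately, in each case exploiting two features of the product construction: (a) the transition relation $\delta_\PA$ couples an $R$-transition of $\MM$ with a $\delta$-transition of $\BB_\phi$ that reads exactly the label $\LL(q)$ of the current $\MM$-state, and (b) both the weight $w_\PA$ and the optimizing set $S_{\PA,\opt}$ depend only on the $Q$-coordinate. Feature (a) lets me pass back and forth between a run of $\PA$ and a pair consisting of a run of $\MM$ together with a run of $\BB_\phi$ over the induced word; combined with the correctness of the translation to $\BB_\phi$ from~\cite{VW86} (which accepts exactly the words satisfying $\phi$), this settles the \emph{satisfiability} claims. Feature (b) then handles the \emph{cost-equality} claims essentially by inspection.

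For the forward direction I start from an accepting run $r_\PA=(q_0,s_0)(q_1,s_1)\ldots$ and project onto coordinates. On the first coordinate, $q_0$ is the initial state of $\MM$ (since $S_{\PA,0}=\{q_0\}\times S_0$), and $\big((q_i,s_i),(q_{i+1},s_{i+1})\big)\in\delta_\PA$ forces $(q_i,q_{i+1})\in R$; hence $r_\MM=q_0q_1\ldots$ is a legitimate run of $\MM$. On the second coordinate, the same transition condition gives $s_0\in S_0$ and $(s_i,\LL(q_i),s_{i+1})\in\delta$, so $r_\BB=s_0s_1\ldots$ is a run of $\BB_\phi$ over the word $\omega=\LL(q_0)\LL(q_1)\ldots$ produced by $r_\MM$. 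Because $F_\PA=Q\times F$, any state of $F_\PA$ occurring infinitely often in $r_\PA$ has its second coordinate in $F$, so $\inf(r_\BB)\cap F\neq\emptyset$; thus $r_\BB$ is accepting, $\omega$ satisfies $\phi$, and therefore $r_\MM$ satisfies $\phi$.

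The backward direction reverses this. Given a run $r_\MM=q_0q_1\ldots$ of $\MM$ whose word $\omega=\LL(q_0)\LL(q_1)\ldots$ satisfies $\phi$, the translation theorem of~\cite{VW86} supplies an accepting run $r_\BB=s_0s_1\ldots$ of $\BB_\phi$ over $\omega$. I then set $r_\PA=(q_0,s_0)(q_1,s_1)\ldots$ and check it is an accepting run of $\PA$: the initial condition holds since $s_0\in S_0$; each step lies in $\delta_\PA$ because $(q_i,q_{i+1})\in R$ and $(s_i,\LL(q_i),s_{i+1})\in\delta$ hold simultaneously; and some $s\in F$ recurs in $r_\BB$, placing $r_\PA$ in $F_\PA=Q\times F$ at infinitely many indices. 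The one point needing some care is that infinitely many \emph{visits} to $F_\PA$ must be upgraded to a single \emph{recurrent} state of $F_\PA$: since $S_\PA$ is finite, the pigeonhole principle forces some state of $F_\PA$ into $\inf(r_\PA)$, giving $\inf(r_\PA)\cap F_\PA\neq\emptyset$. (The forward direction needs no such step, since there recurrence is assumed.)

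Finally, in both directions the cost equality is immediate from feature (b). Since $w_\PA\big((q_i,s_i),(q_{i+1},s_{i+1})\big)=w(q_i,q_{i+1})$ and both time sequences start at $t_0=0$, the time stamp of the $i$th vertex of $r_\PA$ equals that of the $i$th state of $r_\MM$; moreover $(q_i,s_i)\in S_{\PA,\opt}$ iff $\opt\in\LL(q_i)$ iff $q_i$ is a satisfying instance of $\opt$ in $\MM$. Hence the extracted sequences $\mathbb{T}_{\PA,\opt}$ and $\mathbb{T}_{\opt}$ coincide term by term, so the $\limsup$ expressions defining $\CC_\PA(r_\PA)$ and $\CC(r_\MM)$ are literally identical and the two costs are equal. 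I expect the only genuinely non-mechanical step to be the pigeonhole upgrade in the backward acceptance argument; everything else is a matter of unwinding the product definition.
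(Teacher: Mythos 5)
Your proof is correct, and it is the standard argument that the paper itself does not spell out: the proposition is stated with a citation to~\cite{VW86} and no proof is given in the text. Both halves of your argument (projection/zipping of runs plus the pigeonhole upgrade from infinitely many visits to $F_\PA$ to a recurrent state, and the observation that $w_\PA$ and $S_{\PA,\opt}$ depend only on the $Q$-coordinate so the time sequences $\mathbb{T}_{\PA,\opt}$ and $\mathbb{T}_{\opt}$ coincide) are exactly what the authors are implicitly relying on.
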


Finally, we need to discuss the structure of an accepting run of a
product automaton $\PA$.   
\begin{definition}[Prefix-Suffix Structure]
  A \emph{prefix} of an accepting run is a finite path from an initial
  state to an accepting state $f\in F_\PA$ containing no other
  occurrence of $f$.  A \emph{periodic suffix} is an infinite run
  originating at the accepting state $f$ reached by the prefix, and
  periodically repeating a finite path originating and ending at $f$,
  and containing no other occurrence of $f$ (but possibly containing
  other vertices in $F_\PA$).  An accepting run is in
  \emph{prefix-suffix} structure if it consists of a prefix followed
  by a periodic suffix.
\end{definition}
Intuitively, the prefix can be thought of as the transient, while the
suffix is the steady-state periodic behavior.

\begin{lemma}[Prefix-Suffix Structure]
\label{lemma:1}
At least one of the accepting runs $r_\PA$ of $\PA$ that minimizes
cost function $\CC_\PA(r_\PA)$ is in prefix-suffix structure.
\end{lemma}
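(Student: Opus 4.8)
The plan is to reduce the problem to one about cycles in the finite weighted graph $\PA$ and to prove two facts: (a) a run in prefix-suffix structure whose suffix repeats a cycle $C$ has cost equal to the largest time-gap between cyclically consecutive occurrences of $S_{\PA,\opt}$ along $C$; and (b) no accepting run can achieve a cost below the minimum of this quantity over all admissible cycles. Writing $g(C)$ for that maximal cyclic gap and $\mu$ for its minimum over all reachable cycles that contain a state of $F_\PA$ and a state of $S_{\PA,\opt}$, the lemma will follow once I show that $\mu$ is attained and that $\CC_\PA(r)\ge\mu$ for every accepting run $r$: the lasso run that reaches an accepting state on a minimizing cycle and then repeats that cycle is then an optimal run in prefix-suffix structure.

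First I would establish the lower bound, which is the heart of the argument. Fix any accepting run $r$ with finite cost $c=\CC_\PA(r)$ and any $\epsilon>0$. Since $c$ is a $\limsup$, there is an index beyond which every gap in $\mathbb{T}_{\PA,\opt}$ is smaller than $c+\epsilon$; restrict attention to this tail. Because $S_\PA$ is finite while the tail contains infinitely many occurrences of $S_{\PA,\opt}$ and infinitely many occurrences of some accepting state $f\in F_\PA$, a pigeonhole argument yields a single state $v\in S_{\PA,\opt}$ that recurs infinitely often, together with two of its occurrences that bracket an occurrence of $f$. The finite path $C$ between these two occurrences of $v$ is a cycle containing both $f$ and an optimizing state. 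The key point is that $C$ is cut at an \emph{optimizing} state: because both endpoints lie in $S_{\PA,\opt}$, every gap that arises when $C$ is repeated periodically is literally one of the original gaps of $r$ in the tail, with no spurious wrap-around contribution. Hence $g(C)<c+\epsilon$, so $\mu\le c+\epsilon$; letting $\epsilon\to 0$ gives $\mu\le c$.

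I expect this cut to be the main obstacle. Had I instead cut the run at two consecutive visits of the accepting state $f$ (the natural choice for directly producing a suffix), the gap joining the last optimizing state of one period to the first optimizing state of the next would be a newly created quantity that the $\limsup$ bound on $r$ does not control, and the lower bound would fail. Cutting at a recurring optimizing state is precisely what keeps every cyclic gap equal to a genuine gap of $r$, and reconciling this with the requirement that the suffix be anchored at an accepting state is the delicate point.

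It remains to show that $\mu$ is attained and that a minimizing cycle can be turned into a bona fide prefix-suffix run. For attainment I would reduce to a finite family: whenever an optimizing state repeats on a cycle, splitting the cycle at that state produces two shorter cycles whose gaps partition those of the original, so $g$ does not increase on at least one piece, and I may keep whichever piece retains an accepting state; likewise any loop of non-optimizing states may be deleted unless it carries the accepting state. This leaves finitely many candidate cycles---distinct optimizing states joined by simple segments, with one short detour through an accepting state---so the minimum $\mu$ is achieved by some reachable cycle $C^*$. Finally, rotating $C^*$ to begin and end at its accepting state $f$ and prepending a finite path from the initial state to $f$ produces a lasso; by part (a) its cost is $g(C^*)=\mu$, and by the lower bound this is optimal among all accepting runs. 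A minor point, handled by choosing the accepting state to occur once on $C^*$ (which the reduction above can be made to guarantee), ensures the repeated path contains no other occurrence of $f$ and so matches the stated prefix-suffix definition exactly.
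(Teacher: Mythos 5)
Your route is genuinely different from the paper's, and its first half is sound. The paper anchors its decomposition at an accepting state $f$ from the outset and then devotes essentially all of its effort to controlling the spurious wrap-around gap $c^{\leadsto f}+c^{f\leadsto}$ that this creates, via equivalence classes of $f$-to-$f$ segments and the choice of a class minimizing the distance from $f$ to its first optimizing state. You instead cut the run at a recurring \emph{optimizing} state that brackets an occurrence of $f$, which makes every cyclic gap of the extracted cycle a genuine gap of the original run; this gives the lower bound $\mu\le\CC_\PA(r)$ cleanly, with no wrap-around term to control, and your finite-family reduction for the attainment of $\mu$ is also believable. As a bonus, your argument actually establishes that a minimizing accepting run exists, which the paper's proof tacitly assumes.

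The gap is in the last step, and it is exactly the point you yourself flag as delicate. Your reduction controls the multiplicity of \emph{optimizing} states (splitting at a repeated optimizing state partitions the gaps) and deletes loops of non-optimizing states, but neither operation bounds the number of occurrences of the accepting anchor when that anchor is non-optimizing and its occurrences on $C^*$ are interleaved with optimizing states. Concretely, take $C^*=f\to o_1\to f\to o_2\to f$ with $o_1,o_2\in S_{\PA,\opt}$ distinct and $f\in F_\PA\setminus S_{\PA,\opt}$: no optimizing state repeats, and neither $f$-to-$f$ arc is a loop of non-optimizing states, so the reduction leaves $f$ occurring twice and the rotated repeated path violates the ``no other occurrence of $f$'' clause of the prefix-suffix definition. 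Moreover, repeating a single $f$-to-$f$ arc creates a new wrap-around gap $c^{\leadsto f}(P_i)+c^{f\leadsto}(P_i)$ that need not be bounded by $g(C^*)$ for an arbitrary choice of arc. So the parenthetical ``which the reduction above can be made to guarantee'' is unjustified as written. The repair is to choose, among the $f$-to-$f$ segments $P_1,\dots,P_k$ of $C^*$ (after discarding those with no optimizing state), the segment $P_{i^*}$ minimizing $c^{f\leadsto}$; then $c^{\leadsto f}(P_{i^*})+c^{f\leadsto}(P_{i^*})\le c^{\leadsto f}(P_{i^*})+c^{f\leadsto}(P_{i^*+1})\le g(C^*)$, the right-hand quantity being a genuine cyclic gap of $C^*$, so repeating $P_{i^*}$ yields a legitimate suffix of cost at most $\mu$. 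This repair is precisely the central argument of the paper's proof: your reorganization postpones it to the final step but does not eliminate it.
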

\begin{proof}
  Let $r_\PA$ be an accepting run that minimizes cost function
  $\CC_\PA(r_\PA)$ and is not in prefix-suffix structure. We will
  prove the existence of an accepting run $\rho_\PA$ in prefix-suffix
  structure, such that $\CC_\PA(\rho_\PA) \leq \CC_\PA(r_\PA)$.  The
  idea behind the proof is that an accepting state must occur
  infinitely many times on $r_\PA$.  We then show that we can extract
  a finite path starting and ending at this accepting state which can
  be repeated to form a periodic suffix whose cost is no larger than
  $\CC_\PA(r_\PA)$.

  To begin, there exists a state $f\in F_\PA$ occurring on $r_\PA$
  infinitely many times. Run $r_\PA$ consists of a prefix
  $r_\PA^{\mathrm{fin}}$ ending at state $f$ followed by an infinite,
  non-periodic suffix $r_\PA^{\mathrm{suf}}$ originating at the state
  $f$ reached by the prefix. The suffix $r_\PA^\mathrm{suf}$ can be
  viewed as infinite number of finite paths of form $fp_1p_2\ldots
  p_nf$, where $p_i \neq f$ for any $i\in \{1,\ldots,n\}$. Let $\RR$
  denote the set of all finite paths of the mentioned form occurring
  on the suffix $r_\PA^{\mathrm{suf}}$.

  Note, that each path in the set $\RR$ has to contain at least one
  occurrence of a state from $S_{\PA,\opt}$. To see this, assume by
  way of contradiction that there is a path $fp_1p_2\ldots p_nf$ that
  does not contain any state from $S_{P,\opt}$. The prefix
  $r_\PA^{\mathrm{fin}}$ followed by infinitely many repetitions of
  this path is indeed an accepting run of $\PA$. However, if projected
  into run of $\TT$, formula $\Always\, \Event\, \opt$ and thus also
  formula $\phi$ is violated, contradicting Proposition~\ref{prop:proj}.

  Similarly as for infinite paths, we associate with each finite path
  of length $n$ a sequence of time instances
  $\mathbb{T}_\PA:=t_0t_1t_2\ldots t_n$, where $t_0 = 0$, and $t_i$
  denotes the time at which the $i$th vertex in the run is reached
  ($t_{i+1} = t_i + w_\PA(p_i,p_{i+1})$).  From this time sequence we
  can extract a sequence $\mathbb{T}_{\PA,\opt}$, containing time
  instances $t_i$, where $p_i \in S_{\PA,\opt}$.

  For each finite path $r\in \RR$ with $n$ states and $k$ occurrences
  of a state from $S_{\PA,\opt}$ we define the following three costs
\begin{itemize}
\item $c^{f\leadsto}(r) = \mathbb{T}_{\PA,\opt}(0) -
  \mathbb{T}_{\PA}(0) $ 
\item $c(r) = \max_{i\in   \{0,\ldots,k-1\}}\left( \mathbb{T}_{\PA,\opt}(i+1) - \mathbb{T}_{\PA,\opt}(i)\right)$
\item $c^{\,\leadsto f}(r) = \mathbb{T}_{\PA}(n) - \mathbb{T}_{\PA,\opt}(k)$.
\end{itemize}

Further, we define an equivalence relation $\sim$ over $\RR$ as follows. Let $r_1,r_2\in \RR$. $r_1 \sim r_2$ if and only if
\begin{itemize}
 \item $c^{f\leadsto}(r_1)=c^{f\leadsto}(r_2)$, 
 \item $c(r_1)=c(r_2)$, and
 \item $c^{\,\leadsto f}(r_1)=c^{\,\leadsto f}(r_2)$.
\end{itemize}
Costs $c^{f\leadsto}$, $c$, and $c^{\leadsto f}$ can be extended to
$c^{f\leadsto}_{\sim}$, $c_\sim$, and $c^{\leadsto f}_{\sim}$ in a
natural way. For example, we define $c^{f\leadsto}_{\sim}([r]_\sim) =
c^{f\leadsto}(r)$, where $r \in [r]_\sim$. The other two costs are
defined analogously.

Let us extract a set $\RR^{\mathrm{inf}}/_\sim$ from the set of
equivalence classes $\RR/_\sim$ such that each class in
$\RR^{\mathrm{inf}}/_\sim$ is infinite or contains a finite path that
is repeated in $r_\PA$ infinitely many times. As a consequence, for
each class $[r]_\sim$ in $\RR^{\mathrm{inf}}/_\sim$, it holds that
$c_\sim([r]_\sim) \leq \CC_\PA(r_\PA)$. The set $\RR/_\sim$ is finite,
because there is only a finite number of different values of
costs. Furthermore, accepting run $r_\PA$ is infinite and thus
$\RR^{\mathrm{inf}}/_\sim$ is nonempty.

Let $[\rho]_\sim \in \RR^{\mathrm{inf}}/_\sim $ now be a class such
that $c^{f\leadsto}_{\sim}([\rho]_\sim)$ is minimal among the classes
from $\RR^{\mathrm{inf}}/_\sim$. 

Each time a finite path in $[\rho]_\sim$ appears in $r_{\PA}$, it
is followed by another finite path.  Consider, that infinitely many times the
``following'' path comes from a class $([r]_\sim) \in
\RR^{\mathrm{inf}}/_\sim$. Then, we must have $c^{\leadsto
  f}([\rho]_\sim) + c^{f\leadsto}([r]_\sim)\leq \CC_\PA(r_\PA)$.
But, $c^{f\leadsto}([r]_\sim) \geq c^{\leadsto f}([\rho]_\sim)$,
and thus $c^{\leadsto f}([\rho]_\sim) + c^{f
  \leadsto}([\rho]_\sim)\leq \CC_\PA(r_\PA)$.

Thus we can build the run $\rho_\PA$ as the prefix
$r_\PA^{\mathrm{fin}}$ followed by a periodic suffix
$\rho_\PA^\mathrm{suf}$, which is obtained by infinitely many
repetitions of an arbitrary path $\rho \in [\rho]_\sim$. $\rho_\PA$ is
in prefix-suffix structure and for its suffix $\rho_\PA^\mathrm{suf}$
it also holds $\CC_\PA(\rho_\PA)=\max_{i \in \nat}
\big(\mathbb{T}_{\PA,\opt}(i+1)-\mathbb{T}_{\PA,\opt}(i+1)\big)=\max\big(c(\rho),c^{f\leadsto}(\rho)+c^{\leadsto
  f}(\rho) \big) \leq \CC_\PA(r_\PA)$.
\end{proof}

\begin{definition}[Suffix Cost]
  The cost of the suffix $p_0p_1\ldots p_np_0p_1\ldots$ of a run
  $r_\PA$ is defined as follows. Let $t_{0,0},t_{0,1},\ldots, t_{0,n},
  t_{1,0}, t_{1,1} \ldots$ be the sequence of times at which the
  vertices of the suffix are reached on run $r_\PA$.  Extract the
  sub-sequence $\mathbb{T}_\PA^{\mathrm{suf}}$ of times $t_{i,j}$,
  where $p_j \in S_{\PA,\opt}$ (i.e. the satisfying instances of
  proposition $\opt$ in transition system $\MM$).  Then, the cost of
  the suffix is
\[
\CC_\PA^{\mathrm{suf}}(r_\PA) = \max_{i\in \nat}(\mathbb{T}_\PA^{\mathrm{suf}}(i+1) -
  \mathbb{T}_\PA^{\mathrm{suf}}(i)). 
\]
\end{definition}
 
From the definition of the product automaton cost $\CC_\PA$ and the
suffix cost $\CC_\PA^{\mathrm{suf}}$ we obtain the following result.
\begin{lemma}[Cost of a Run]
\label{lemma:2}
Given a run $r_\PA$ with prefix-suffix structure and its suffix
$p_0p_1p_2\ldots p_np_0p_1\ldots$, the value of the cost function $\CC_\PA
(r_\PA)$ is equal to the cost of the suffix
$\CC_\PA^{\mathrm{suf}}(r_\PA)$.
\end{lemma}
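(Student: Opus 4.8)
The plan is to exploit the fact that the prefix is finite while the suffix is exactly periodic, so that the $\limsup$ defining $\CC_\PA$ collapses to a maximum over a single period, which is by definition $\CC_\PA^{\mathrm{suf}}(r_\PA)$.

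First I would dispose of the prefix. Since the prefix is a finite path, it contributes only finitely many satisfying instances of $\opt$ to the sequence $\mathbb{T}_{\PA,\opt}$. Consequently, only finitely many of the differences $\mathbb{T}_{\PA,\opt}(i+1)-\mathbb{T}_{\PA,\opt}(i)$ involve a satisfying instance lying in the prefix (including the single term straddling the prefix--suffix boundary), and a $\limsup$ is insensitive to any finite initial segment of a sequence. Hence $\CC_\PA(r_\PA)$ equals the $\limsup$ of the consecutive differences taken over only the satisfying instances that lie in the suffix, i.e.\ over the subsequence $\mathbb{T}_\PA^{\mathrm{suf}}$.

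Next I would show that this suffix sequence of differences is periodic. Writing the suffix as repetitions of the cycle $p_0p_1\ldots p_n$, let $W=\sum_{j=0}^{n} w_\PA(p_j,p_{j+1})$ (with $p_{n+1}:=p_0$) be the total weight of one cycle. Because every repetition traverses the same edges, the arrival time of a vertex in repetition $i+1$ exceeds its arrival time in repetition $i$ by exactly $W$; that is, $t_{i+1,j}=t_{i,j}+W$ for every $j$. By the argument used in Lemma~\ref{lemma:1}, each cycle contains at least one state from $S_{\PA,\opt}$, so let these occur at positions $j_1<\dots<j_k$ within a cycle. The consecutive differences of $\mathbb{T}_\PA^{\mathrm{suf}}$ are then the $k-1$ within-cycle gaps $t_{i,j_{m+1}}-t_{i,j_m}$ together with the single wraparound gap $t_{i+1,j_1}-t_{i,j_k}=W-(t_{i,j_k}-t_{i,j_1})$, and each of these is independent of the repetition index $i$. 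Therefore the sequence of differences is periodic with period $k$.

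Finally, the $\limsup$ of a periodic sequence equals the maximum of the values attained over one period, since every such value recurs infinitely often. This maximum is precisely $\max_{i\in\nat}\big(\mathbb{T}_\PA^{\mathrm{suf}}(i+1)-\mathbb{T}_\PA^{\mathrm{suf}}(i)\big)=\CC_\PA^{\mathrm{suf}}(r_\PA)$, which yields $\CC_\PA(r_\PA)=\CC_\PA^{\mathrm{suf}}(r_\PA)$. The only delicate point is the first step: one must verify that the boundary difference between the last $\opt$-instance of the prefix and the first $\opt$-instance of the suffix occurs exactly once, so that it cannot influence the $\limsup$; the remainder is a direct periodicity computation.
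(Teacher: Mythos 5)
Your proof is correct and supplies exactly the argument the paper elides: the paper states Lemma~\ref{lemma:2} without proof as an immediate consequence of the definitions of $\CC_\PA$ and $\CC_\PA^{\mathrm{suf}}$, and your three steps (the $\limsup$ ignores the finitely many differences involving prefix instances, the gap sequence on the suffix is periodic with period equal to the number of $\opt$-instances per cycle, and the $\limsup$ of a periodic sequence is its maximum over one period) are precisely the routine verification being omitted. Your attention to the single boundary difference and to the wraparound gap $W-(t_{i,j_k}-t_{i,j_1})$ is exactly the right level of care.
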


Our aim is to synthesize a run $r_\MM$ of $\MM$ minimizing the cost function
$\CC(r_\MM)$ and ensuring that the word produced by this run will be accepted
by $\BB$. This goal now translates to generating a run $r_\PA$ of $\PA$, such
that the run satisfies the B\"uchi condition $F_\PA$ and minimizes
cost function $\CC_\PA(r_\PA)$. Furthermore, to find a satisfying run $r_\PA$ that
minimizes $\CC_\PA(r_\PA)$, it is enough to consider runs in prefix-suffix
structure (see Lemma~\ref{lemma:1}). From Lemma~\ref{lemma:2} it
follows that the whole problem reduces to finding a periodic suffix
$r_\PA^{\mathrm{suf}} = fp_1p_2\ldots p_nfp_1\ldots$ in $\PA$, such that:
\begin{enumerate}
\item $f$ is reachable from an initial state in $S_{\PA,0}$,
 \item $f \in F_\PA$ (i.e., $f$ is an accepting state), and
 \item the cost of the suffix $r_\PA^{\mathrm{suf}}$ is minimum among
   all the suffices satisfying (i) and (ii).  
\end{enumerate}
Finally, we can find a finite prefix in $\PA$ leading from an initial
state in $S_{\PA,0}$ to the state $f$ in the suffix
$r_\PA^{\mathrm{suf}}$. By concatenating the prefix and suffix, we
obtain an optimal run in $\PA$.  By projecting the optimal run to
$\MM$, via Proposition~\ref{prop:proj}, we obtain a solution to our
stated problem.

\subsection{Graph Algorithm for Shortest Bottleneck Cycles}

We now focus on finding an optimal suffix in the product automaton.
We cast this problem as path optimization on a graph.  To do this, let
us define some terminology.

A graph $G=(V,E,w)$ consists of a vertex set $V$, an edge set
$E\subseteq V\times V$, and a weight function $w: E\to \real_{>0}$.  A
\emph{cycle} in $G$ is a vertex sequence $v_1v_2\ldots v_kv_{k+1}$,
such that $(v_i,v_{i+1})\in E$ for each $i\in\{1,\ldots,k\}$, and $v_1
=
v_{k+1}$. 
Given a vertex set $S\subseteq V$, consider a cycle
$c=v_1\ldots v_kv_{k+1}$ containing at least one vertex in $S$.  Let
$(i_1,i_2,\ldots,i_s)$ be the ordered set of vertices in $c$ that are
elements of $S$ (i.e., Indices with order $i_1 < i_2 < \cdots < i_m$,
such that $v_j \in S$ if and only if $j\in \{i_1,i_2,\ldots, i_s\}$).
Then, the \emph{$S$-bottleneck length} is
\[
\max_{\ell \in \{1,\ldots,s\}}\sum_{j=i_{\ell}}^{i_{\ell+1}-1}w(e_j),
\]
where $i_{s+1} = i_1$.  In words, we $S$-bottleneck distance is
defined as follows.
\begin{definition}[$S$-bottleneck length]
  Given a graph $G=(V,E,w)$, and a vertex set $S\subseteq V$, the
  $S$-bottleneck length of a cycle in $G$ is the maximum
  distance between successive appearances of an element of $S$ on the
  cycle.\footnote{If the cycle does not contain an element of $S$, then its
  $S$-bottleneck length is defined as $+\infty$.}
\end{definition}
The \emph{bottleneck length} of a cycle is defined as the maximum
length edge on the cycle~\cite{BK-JV:07}.  In contrast, the
$S$-bottleneck length measures distances between vertices in $S$.

With the terminology in place, our goal is to solve the
\emph{constrained $S$-bottleneck problem}:
\begin{problem}
  \label{problem:2}
  Given a graph $G = (V,E,w)$, and two vertex sets $F,S\subseteq V$,
  find a cycle in $G$ containing at least one vertex in $F$, with
  minimum $S$-bottleneck length.
\end{problem}

Our solution, shown in the \textsc{Min-Bottleneck-Cycle} algorithm,
utilizes Dijkstra's algorithm~\cite{BK-JV:07} for computing shortest
paths between pairs of vertices (called \textsc{Shortest-Path}), and a
slight variation of Dijkstra's algorithm for computing shortest
bottleneck paths between pairs of vertices (called
\textsc{Shortest-Bot-Path}).

\textsc{Shortest-Path} takes as inputs a graph $G = (V,E,w)$, a set of
source vertices $A\subseteq V$, and a set of destination vertices
$B\subseteq V$.  It outputs a distance matrix $D\in
\real^{|A|\times|B|}$, where the entry $D(i,j)$ gives the
shortest-path distance from $A_i$ to $B_j$.  It also outputs a
predecessor matrix $P\in V^{|A|\times |V|}$, where $P(i,j)$ is the
predecessor of $j$ on a shortest path from $A_i$ to $V_j$.  For a
vertex $v\in V$, the shortest path from $v$ to $v$ is defined as the
shortest cycle containing $v$.  If there does not exist a path between
vertices, then the distance is $+\infty$.

\textsc{Shortest-Bot-Path} has the same inputs as
\textsc{Shortest-Path}, but it outputs paths which minimize the
maximum edge length, rather than the sum of edge lengths.

Fig.~\ref{fig:directed_example} (left) shows an example input to the
algorithm.  The graph contains 12 vertices, with one vertex (diamond)
in $F$, and four vertices (square) in $S$.
Fig.~\ref{fig:directed_example} (right) shows the optimal solution as
produced by the algorithm.  The bottleneck occurs between the square
vertices immediately before and after the diamond vertex.

\begin{algorithm} 
  \dontprintsemicolon %
 \KwInput{A directed graph $G$, and vertex subsets $F$ and $S$} %
  \KwOutput{A cycle in $G$ which contains at least one vertex in $F$
    and minimizes the $S$-bottleneck distance.}  
Shortest paths between vertices in $S$:
\[
(D,P) \leftarrow \SP(G,S,S). 
\]
\; %
Define a graph $G_S$ with vertices $S$ and adjacency matrix~$D$. \;%
Shortest $S$-bottleneck paths between vertices in $S$:
\[
(D_{\mathrm{bot}},P_{\mathrm{bot}})\leftarrow\SBP(G_S,S,S).
\]
\; %
Shortest paths from each vertex in $F$ to each vertex in $S$, and from
each vertex in $S$ to each vertex in $F$:
\begin{align*}
(D_{F\to S},P_{F\to S})&\leftarrow \SP(G,F,S) \\
(D_{S\to F},P_{S\to F})&\leftarrow \SP(G,S,F).
\end{align*}
Set $D_{F\to S}(i,j) = 0$ and $D_{S\to F}(j,i) = 0$ for all $i,j$ such
that $F_i = S_j$.
\;%
For each triple $(f,s_1,s_2) \in F\times S\times S$, let
$C(f,s_1,s_2)$ be $D_{F \to S}(f,s_1) + D_{S \to F}(s_2,f)$, if $f\neq
s_1 = s_2$, and $\max\big\{D_{F \to S}(f,s_1) + D_{S \to F}(s_2,f),
D_{\mathrm{bot}}(s_1,s_2)\big\}$, otherwise.\;%
Find the triple $(f^*,s_1^*,s_2^*)$ that minimizes $C(f,s_1,s_2)$.
\;%
If minimum cost is $+\infty$, then output ``no cycle exists.''  Else,
output cycle by extracting the path from $f^*$ to $s_1^*$ using
$P_{F\to S}$, the path from $s_1^*$ to $s_2^*$ using
$P_{\mathrm{bot}}$ and $P$, and the path from $s_2^*$ to $f^*$ using
$P_{S\to F}$. \; %
\caption{\textsc{Min-Bottleneck-Cycle}$(G,S,F)$}
\label{alg:main_alg}
\end{algorithm}

\begin{figure}
\centering
\includegraphics[width=0.48\linewidth]{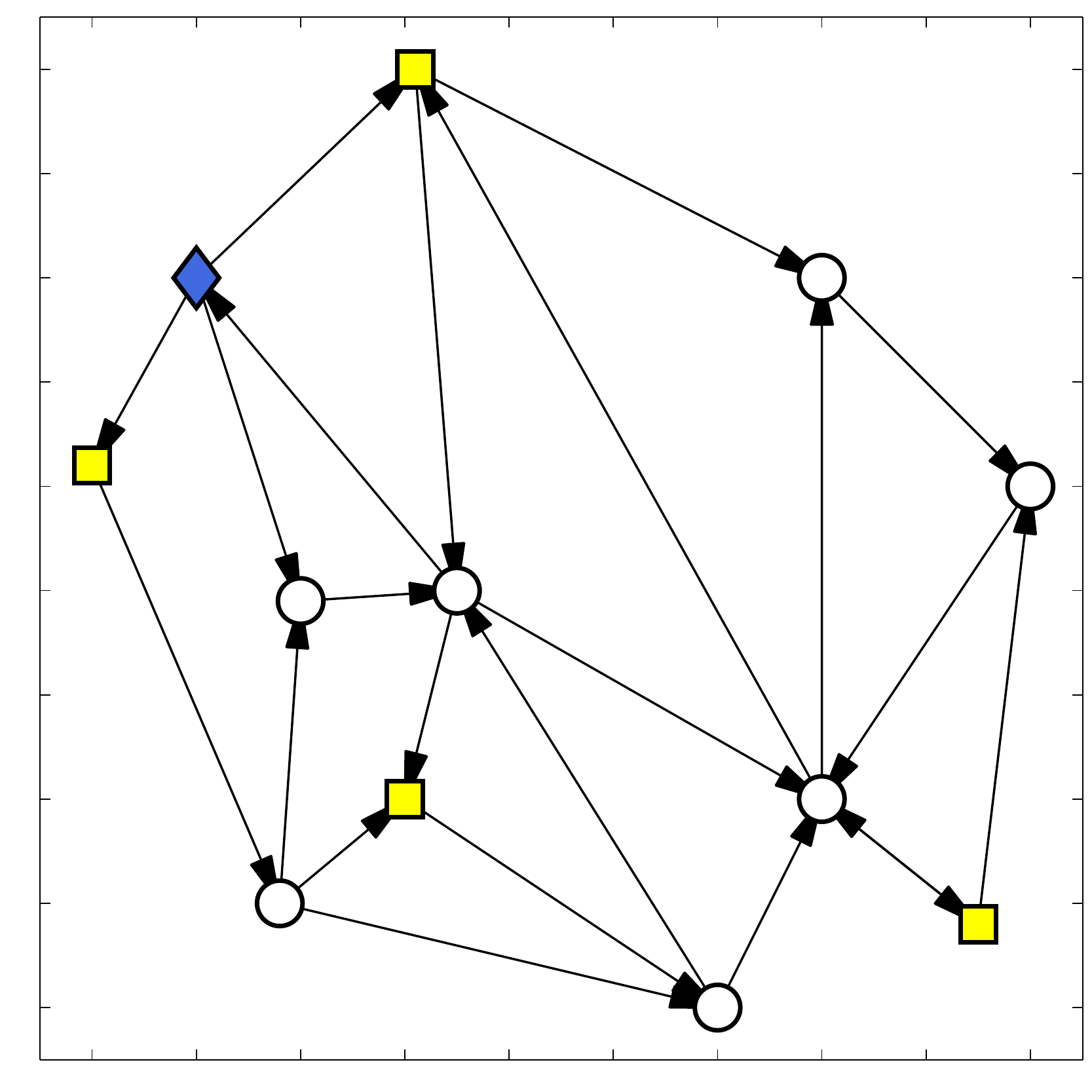}
\hfill
\includegraphics[width=0.48\linewidth]{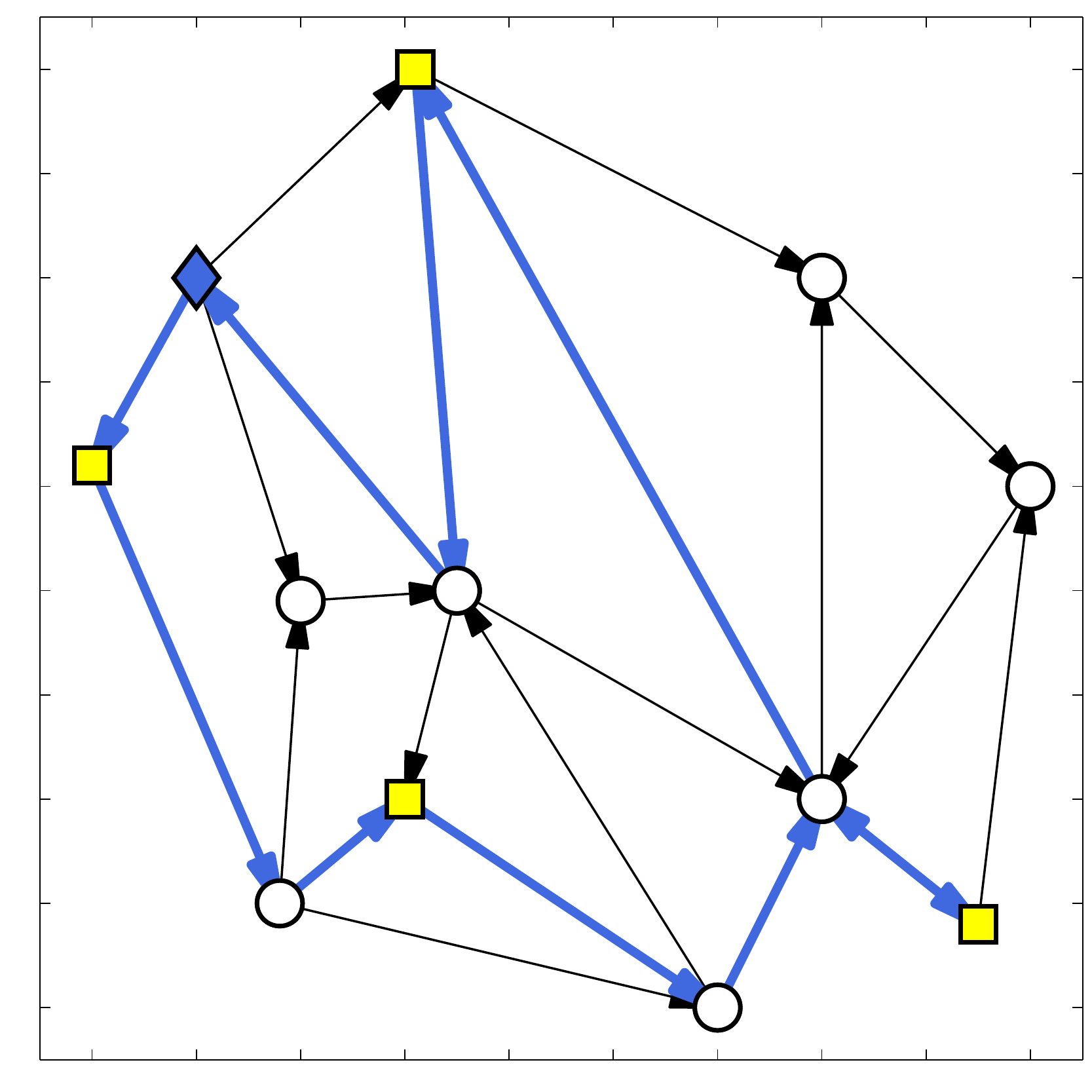}
\caption{A directed graph for illustrating the algorithm.  The edge
  weights are given by the Euclidean distance.  The set $F$ is a
  singleton given by the blue diamond.  The vertices in $S$ are drawn
  as yellow squares. The thick blue edges in the right figure form a
  cycle with minimum $S$-bottleneck length.}
\label{fig:directed_example}
\end{figure}
In the algorithm, one has to take special care that cycle lengths are
computed properly when $f=s_1$, $s_1=s_2$, or $f=s_2$.  This is done
by setting some entries of $D_{F\to S}$ and $D_{S\to F}$ to zero in
step~4, and by defining the cost differently when $f\neq s_1=s_2$ in
step~5.  In the following theorem we show the correctness of the
algorithm.

\begin{theorem}[\textsc{Min-Bottleneck-Cycle} Optimality]
\label{thm:min_bot_cyc}
The \textsc{Min-Bottleneck-Cycle} algorithm solves the constrained
$S$-bottleneck problem (Problem~\ref{problem:2}).
\end{theorem}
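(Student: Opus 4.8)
The plan is to show that the minimum value $\min_{(f,s_1,s_2)}C(f,s_1,s_2)$ computed in steps~5--6 coincides with the optimal value of Problem~\ref{problem:2}, and that the path reconstructed in step~7 realizes it. Writing $\beta(c)$ for the $S$-bottleneck length of a cycle $c$, $\beta^\star$ for its minimum over all feasible cycles (those containing a vertex of $F$), and $C^\star$ for the minimum cost over all triples, I would establish the two inequalities $C^\star \le \beta^\star$ (completeness) and $\beta^\star \le C^\star$ together with feasibility of the reconstructed cycle (soundness), and then read off optimality. The guiding structural observation is that on any feasible cycle a distinguished vertex $f\in F$ splits the cyclic sequence of $S$-vertices into exactly one ``special'' gap containing $f$ and a chain of ordinary $S$-to-$S$ gaps, and that $C$ is precisely the $\max$ of the length of the special gap and the bottleneck of the chain.

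For completeness, I would take an optimal feasible cycle $c$ with $\beta(c)=\beta^\star$, fix an occurrence of some $f\in F$ on $c$, and let $s_1$ be the first $S$-vertex encountered after $f$ and $s_2$ the last $S$-vertex encountered before $f$ (cyclically). The segment of $c$ running $s_2\leadsto f\leadsto s_1$ is a single gap of $c$, so its length is at most $\beta(c)$; since that length is at least $D_{S\to F}(s_2,f)+D_{F\to S}(f,s_1)$ (these being shortest-path distances), the first term of $C(f,s_1,s_2)$ is $\le\beta^\star$. The complementary arc $s_1\leadsto s_2$ of $c$ visits $S$-vertices $s_1=u_0,u_1,\dots,u_m=s_2$ in order; each consecutive sub-walk has length at least $D(u_j,u_{j+1})$, so $u_0u_1\cdots u_m$ is a path in $G_S$ whose bottleneck is at most the largest of these ordinary gaps, hence at most $\beta^\star$, and by minimality of $\SBP$ this gives $D_{\mathrm{bot}}(s_1,s_2)\le\beta^\star$. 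Taking the maximum yields $C(f,s_1,s_2)\le\beta^\star$, hence $C^\star\le\beta^\star$.

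For soundness, I would reverse the construction: given any triple with finite cost, step~7 concatenates a shortest $f\leadsto s_1$ path, the $G_S$-bottleneck path $s_1\leadsto s_2$ (each of its $G_S$-edges expanded through $P$ into a shortest $S$-to-$S$ path in $G$), and a shortest $s_2\leadsto f$ path into a closed walk, which is a feasible cycle since it contains $f$ and $s_1$. Here I would flag the one point that must be argued rather than asserted: the reconstructed shortest sub-paths may pass through further vertices of $S$, but such vertices only \emph{subdivide} existing gaps and can never enlarge the maximum gap, so the resulting $S$-bottleneck length is at most $\max\{D_{F\to S}(f,s_1)+D_{S\to F}(s_2,f),\,D_{\mathrm{bot}}(s_1,s_2)\}=C(f,s_1,s_2)$. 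Applying this to the minimizing triple produces a feasible cycle of $S$-bottleneck length $\le C^\star$, whence $\beta^\star\le C^\star$. Combining with completeness forces $\beta^\star=C^\star$ and shows the reconstructed cycle is optimal; and if $C^\star=+\infty$ then completeness rules out any feasible cycle of finite $S$-bottleneck length, justifying the ``no cycle exists'' output.

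The step I expect to be the main obstacle is the bookkeeping for the degenerate coincidences $f=s_1$, $f=s_2$, $s_1=s_2$, and for closed walks that revisit vertices, since these are exactly where the clean ``one special gap plus a chain'' picture breaks down. I would argue that zeroing $D_{F\to S}(i,j)$ and $D_{S\to F}(j,i)$ whenever $F_i=S_j$ (step~4) makes the special-gap term correct when $f$ itself lies in $S$, and that the separate rule $C(f,s_1,s_2)=D_{F\to S}(f,s_1)+D_{S\to F}(s_2,f)$ in the case $f\neq s_1=s_2$ (step~5) handles cycles in which $f$'s first-following and last-preceding $S$-vertices coincide: there the whole cycle is a single gap through $f$ with no ordinary $S$-to-$S$ gaps, so including a $D_{\mathrm{bot}}(s,s)$ term (which would measure a shortest cycle at $s$) would be spurious. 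Verifying that these two conventions make the identity $C(f,s_1,s_2)=\max\{\text{special gap},\text{chain bottleneck}\}$ hold uniformly across all cases, and that permitting repeated vertices in a ``cycle'' does not invalidate the gap decomposition, is the delicate part of the argument.
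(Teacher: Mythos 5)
Your proposal is correct and follows essentially the same route as the paper's proof: extract the triple $(f,s_1,s_2)$ from a feasible cycle, establish the two-sided bound $L(f,s_1,s_2)\leq C(f,s_1,s_2)\leq L(c)$ (your soundness/completeness inequalities), and conclude equality at the optimum. You simply spell out in more detail what the paper dismisses as ``straightforward to verify'' --- the special-gap-plus-chain decomposition, the subdivision argument for extra $S$-vertices on reconstructed shortest paths, and the degenerate coincidences handled by steps 4--5.
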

\begin{proof}
  Every valid cycle must contain at least one element from $F$ and at
  least one element from $S$.  Let $c:=v_1v_2\ldots v_kv_1$, be a
  valid cycle, and without loss of generality let $v_1\in F$.  From
  this cycle we can extract the triple $(v_1,v_a,v_b)\in F\times
  S\times S$, where$v_a,v_b\in S$, and $v_i \notin S$ for all $i <a$
  and for all $i > b$. (Note that, $a=b=1$ is possible.)

  Consider a cycle $c$ with corresponding triple $(f,s_1,s_2)$, and
  let $L(c)$ denote its $S$-bottleneck length.  It is straightforward
  to verify, using the definition of $S$-bottleneck length, that $L(c)
  \geq C(f,s_1,s_2)$. 

 The cycle computed in step 5 (as given by the four predecessor
  matrices) takes the shortest path from $f$ to $s_1$, the shortest
  $S$-bottleneck path from $s_1$ to $s_2$, and the shortest path from
  $s_2$ to $f$.  However, the shortest path from $f$ to $s_1$ (and
  from $s_2$ to $f$) may contain other vertices from $S$.  Thus, the
  $S$-bottleneck length of this cycle, denoted $L(f,s_1,s_2)$,
  satisfies
  \begin{equation}
    \label{eq:length_bd}
    L(f,s_1,s_2) \leq C(f,s_1,s_2) \leq L(c),
  \end{equation}
  implying that $C(f,s_1,s_2)$ upper bounds the length of the computed
  cycle.  However, if we take $c$ to be a cycle with minimum length,
  then necessarily $L(c) \leq L(f,s_1,s_2)$.  Hence,
  equation~\eqref{eq:length_bd} implies that for an optimal cycle,
  $L(f,s_1,s_2) = C(f,s_1,s_2) = L(c)$.  Thus, by minimizing the cost
  function in step 5 we compute the minimum length cycle.
\end{proof}

\textbf{Computational Complexity:} Finally, we characterize the
computational complexity of the \textsc{Min-Bottleneck-Cycle}
algorithm.  Let $n$, $m$, $n_S$, and $n_F$, be the number of vertices
(edges) in the sets $V$, $E$, $S$, and $F$, respectively.  Dijkstra's
algorithm can be implemented to compute shortest paths from a source
vertex $v\in V$, to all other vertices in $V$ in $O(n\log n+m)$ run
time.  Thus, for sparse graphs (which includes many transition
systems), the run time is $O(n\log n)$.

\begin{proposition}[\textsc{Min-Bottleneck-Cycle} run time]
  The run time of the \textsc{Min-Bottleneck-Cycle} algorithm is
  $O\big( (n_S+n_F)(n\log n + m + n_S^2) \big)$.  Thus, in the
  worst-case, the run time is $O(n^3)$.  For sparse graphs with
  $n_S,n_F\ll n$, the run time is $O\big( (n_S+n_F) n\log n\big)$.
\end{proposition}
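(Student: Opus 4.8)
The plan is to charge the running time step by step through the seven steps of \textsc{Min-Bottleneck-Cycle}, invoking the stated single-source Dijkstra bound of $O(n\log n + m)$, and then to sum and factor the results. Throughout I would carefully distinguish the invocations on the full graph $G$ (with $n$ vertices and $m$ edges) from those on the auxiliary graph $G_S$, which is complete on the $n_S$ vertices of $S$ and hence has $\Theta(n_S^2)$ edges; keeping these edge counts separate is what makes the final factoring come out cleanly.

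First I would treat the calls to \SP. Step~1 runs Dijkstra from each of the $n_S$ sources in $S$, and step~4 runs it from each of the $n_F$ sources in $F$ and each of the $n_S$ sources in $S$; since a single-source run already yields the distance to every destination, these amount to $O(n_S + n_F)$ Dijkstra computations on $G$, each costing $O(n\log n + m)$, for a total contribution of $O\big((n_S+n_F)(n\log n + m)\big)$. The zeroing of entries in step~4, and the path extraction in step~7 (which expands at most $n_S$ bottleneck hops, each into an $O(n)$ subpath via $P$, so $O(n_S\,n)$ in all), are dominated by this term and can be dispatched quickly.

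Next I would handle the bottleneck computation. Step~2 assembles $G_S$ from the matrix $D$ in $O(n_S^2)$ time. Step~3 then runs \SBP from each of the $n_S$ vertices of $G_S$. The key point is that \SBP is a Dijkstra-style relaxation in which edge-sum is replaced by edge-max, so it retains the $O(N\log N + M)$ bound on a graph with $N$ vertices and $M$ edges; applied to $G_S$ with $N=n_S$ and $M=\Theta(n_S^2)$, each run costs $O(n_S\log n_S + n_S^2)=O(n_S^2)$, giving $O(n_S^3)$ over the $n_S$ sources. Finally, steps~5 and~6 iterate over the $O(n_F n_S^2)$ triples in $F\times S\times S$ with $O(1)$ work per triple, contributing $O(n_F n_S^2)$.

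Summing, the Dijkstra terms give $O\big((n_S+n_F)(n\log n + m)\big)$, while steps~3, 5, and~6 give $n_S^3 + n_F n_S^2 = (n_S+n_F)n_S^2$; combining yields the claimed $O\big((n_S+n_F)(n\log n + m + n_S^2)\big)$. The worst-case $O(n^3)$ then follows by substituting $n_S,n_F \le n$ and $m \le n^2$, so that the second factor is $O(n^2)$; the sparse bound follows by taking $m=O(n)$ and $n_S,n_F \ll n$, under which the $n\log n$ term dominates the second factor and the product collapses to $O\big((n_S+n_F)n\log n\big)$. I expect the only genuinely non-routine step to be the complexity charge for \SBP in step~3: one must verify both that the max-based relaxation preserves Dijkstra's asymptotics and that the relevant edge count for $G_S$ is $n_S^2$ rather than $m$, since $G_S$ is dense; the remainder is bookkeeping over the per-step costs.
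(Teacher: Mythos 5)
Your proposal is correct and follows essentially the same route as the paper's proof: a step-by-step charge using the $O(n\log n + m)$ single-source Dijkstra bound, with $G_S$ treated as a dense graph on $n_S$ vertices so that the \SBP{} calls contribute $O(n_S \cdot n_S^2)$, and the triple enumeration contributing $O(n_F n_S^2)$. Your accounting is in fact slightly more thorough than the paper's (explicitly covering the path extraction, the $G_S$ assembly, and the derivation of the worst-case and sparse-case corollaries), but the decomposition and the key observations are the same.
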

\begin{proof}
  We simply look at the run time of each step in the algorithm.  Step
  1 requires $n_S$ calls to Dijkstra's algorithm, and has run time
  $O(n_S(n\log(n) + m))$. Step 3 requires $n_S$ calls to Dijkstra's
  algorithm on a smaller graph $G_S= (S,E_S,w_S)$, and has run time
  $O(n_S(n_S\log(n_S) + |E_S|))$.  Step 4 has run time $O(n_F(n\log(n)
  + m))$.  Finally, step 5 and 6 require searching over all $n_F\cdot
  n_S^2$ possibilities, and have run time $O(n_Fn_S^2)$.  Since $|E_S|
  \leq n_S^2$, the run time in general is given by
  $O\big((n_S+n_F)(n\log n + m + n_S^2) \big)$.
\end{proof}

\subsection{The \textsc{Optimal-Run} algorithm}

We are now ready to combine the results from the previous section to
present a solution to Problem~\ref{problem:1}.  The solution is
summarized in the \textsc{Optimal-Run} algorithm.
\begin{algorithm} 
  \dontprintsemicolon %
 \KwInput{A weighted transition system $\MM$, and temporal logic
    specification $\phi$ in form~\eqref{eq:temp_spec}.} %
  \KwOutput{A run in $\MM$ which satisfies $\phi$ and
    minimizes~\eqref{eq:cost_fn2}.}%
  Convert $\phi$ to a B\"uchi automaton $\BB_{\phi}$. \;%
  Compute the product automaton $\PA = \MM\times \BB_{\phi}$. \;
  Compute the cycle
  \textsc{Min-Bottleneck-Cycle}$(G,S_{\PA,\pi},F_\PA)$, where $G =
  (S_\PA,\delta_\PA,w_\PA)$. \; %
  Compute a shortest path from $S_{\PA, 0}$ to the cycle. \;%
  Project the complete run (path and cycle) to a run on $\MM$ using
  Proposition~\ref{prop:proj}. \; %
\caption{\textsc{Optimal-Run}$(\MM,\phi)$}
\label{alg:complete_alg}
\end{algorithm}

Combining Lemma~\ref{lemma:1}, Theorem~\ref{thm:min_bot_cyc}, and
Proposition~\ref{prop:proj}, we obtain the following result.
\begin{theorem}[Correctness of \textsc{Optimal-Run}]
The \textsc{Optimal-Run} algorithm solves Problem~\ref{problem:1}.
\end{theorem}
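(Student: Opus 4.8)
The plan is to verify that each step of \textsc{Optimal-Run} is justified by one of the previously established results, so that the run it outputs is a cost-minimizing run of $\MM$ satisfying $\phi$. The backbone of the argument is the reduction already spelled out after Lemma~\ref{lemma:2}: an optimal satisfying run of $\MM$ corresponds to an optimal accepting run of $\PA$, which may be taken in prefix-suffix structure, and whose cost is the cost of its periodic suffix.

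First I would use Proposition~\ref{prop:proj} to transfer the problem from $\MM$ to $\PA$. Since projection sends accepting runs of $\PA$ to satisfying runs of $\MM$ (and conversely lifts satisfying runs of $\MM$ to accepting runs of $\PA$) while preserving the value of the cost function, minimizing $\CC$ over satisfying runs of $\MM$ is equivalent to minimizing $\CC_\PA$ over accepting runs of $\PA$. By Lemma~\ref{lemma:1} it suffices to search among runs in prefix-suffix structure, and by Lemma~\ref{lemma:2} the cost of any such run equals the cost of its suffix. Since $\CC_\PA$ is a $\limsup$, the finite prefix is irrelevant to the asymptotic cost, so the task reduces to finding a periodic suffix---a cycle in $\PA$ through some $f \in F_\PA$ reachable from $S_{\PA,0}$---of minimum suffix cost, together with any prefix connecting the initial state to it.

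The key identification I would make is that the suffix cost $\CC_\PA^{\mathrm{suf}}$ of such a cycle, viewed in the graph $G = (S_\PA, \delta_\PA, w_\PA)$, is precisely the $S_{\PA,\opt}$-bottleneck length of that cycle. Unrolling the periodic suffix and reading off the satisfying instances of $\opt$, the maximal gap between consecutive satisfying times---including the wrap-around gap that straddles the repetition point---is exactly the maximum distance between successive appearances of an element of $S_{\PA,\opt}$ around the cycle. With $S = S_{\PA,\opt}$ and $F = F_\PA$, minimizing suffix cost over cycles through $F_\PA$ is therefore verbatim the constrained $S$-bottleneck problem (Problem~\ref{problem:2}). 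Theorem~\ref{thm:min_bot_cyc} then guarantees that step 3, the call to \textsc{Min-Bottleneck-Cycle}$(G, S_{\PA,\opt}, F_\PA)$, returns an optimal such cycle; step 4 supplies a prefix from $S_{\PA,0}$; and step 5 projects the resulting prefix-suffix run back to $\MM$ via Proposition~\ref{prop:proj}, yielding a run of minimum cost $\CC$.

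The main point requiring care is this identification of the suffix cost with the cyclic $S$-bottleneck length, in particular the treatment of the wrap-around term $i_{s+1} = i_1$ and the degenerate cases where the accepting state coincides with a bottleneck endpoint; these are exactly the situations already handled inside \textsc{Min-Bottleneck-Cycle} (the zeroing of diagonal entries in step 4 and the separate cost used when $f \neq s_1 = s_2$ in step 5), so I would check that the periodic interpretation of the suffix lines up with those conventions. A secondary caveat is reachability: \textsc{Min-Bottleneck-Cycle} optimizes over \emph{all} cycles through $F_\PA$, so to honor condition (i) of the reduction one should run it on the subgraph of $\PA$ reachable from $S_{\PA,0}$ (equivalently, treat a nonexistent prefix in step 4 as discarding that cycle). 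Under the standing assumption that a satisfying run exists, a reachable accepting cycle exists, and the algorithm returns an optimal one.
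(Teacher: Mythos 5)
Your proposal is correct and follows essentially the same route as the paper, whose proof is simply the one-line combination of Proposition~\ref{prop:proj}, Lemma~\ref{lemma:1}, Lemma~\ref{lemma:2} (via the reduction spelled out after it), and Theorem~\ref{thm:min_bot_cyc}; you have merely filled in the details of that chain, including the identification of the suffix cost with the cyclic $S_{\PA,\opt}$-bottleneck length. Your reachability caveat---that \textsc{Min-Bottleneck-Cycle} should be run on (or its output restricted to) the portion of $\PA$ reachable from $S_{\PA,0}$---is a genuine refinement that the paper glosses over, but it does not change the structure of the argument.
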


\section{Experiments}
\label{sec:experiment}

We have implemented the \textsc{Optimal-Run} algorithm in simulation
and on a physical road network testbed.  The road network shown in
Fig.~\ref{fig:road_network} is a collection of roads, intersections,
and parking lots, connected by a simple set of rules ({\it e.g.,} a
road connects two (not necessarily different) intersections, the
parking lots can only be located on the side of a road). The city is
easily reconfigurable through re-taping.  The robot is a Khepera III
miniature car. The car can sense when entering an intersection from a
road, when entering a road from an intersection, when passing in front
of a parking lot, when it is correctly parked in a parking space, and
when an obstacle is dangerously close.  The car is programmed with
motion and communication primitives allowing it to safely drive on a
road, turn in an intersection, and park.  The car can communicate
through Wi-Fi with a desktop computer, which is used as an interface
to the user ({\it i.e.,} to enter the specification) and to perform
all the computation necessary to generate the control strategy.  Once
computed, this is sent to the car, which executes the task
autonomously by interacting with the environment.

Modeling the motion of the car in the road network using a weighted
transition system (Def. \ref{def:transition system}) is depicted in
Fig.~\ref{fig:platform} and proceeds as follows.  The set of states
$Q$ is the set of labels assigned to the intersections, parking lots,
and branching points between the roads and parking lots. 
The transition relation $R$ shows how the regions are connected and
the transitions' labels give distances between them (measured in
inches).  In our testbed the robot moves at constant speed $\nu$, and
thus the distances and travel times are equivalent.  For these
experiments, the robot can only move on right hand lane of a road and
it cannot make a U-turn at an intersection. To capture this, we model
each intersection as four different states. Note that, in reality,
each state in $Q$ has associated a set of motion primitives, and the
selection of a motion primitive ({\it e.g.,} {\it go\_straight}, {\it
  turn\_right}) determines the transition to one unique next
states. This motivates our assumption that the weighted transition
system from Def. \ref{def:transition system} is deterministic, and
therefore its inputs can be removed.

\begin{figure}
\centering
\scalebox{0.59}{
\input{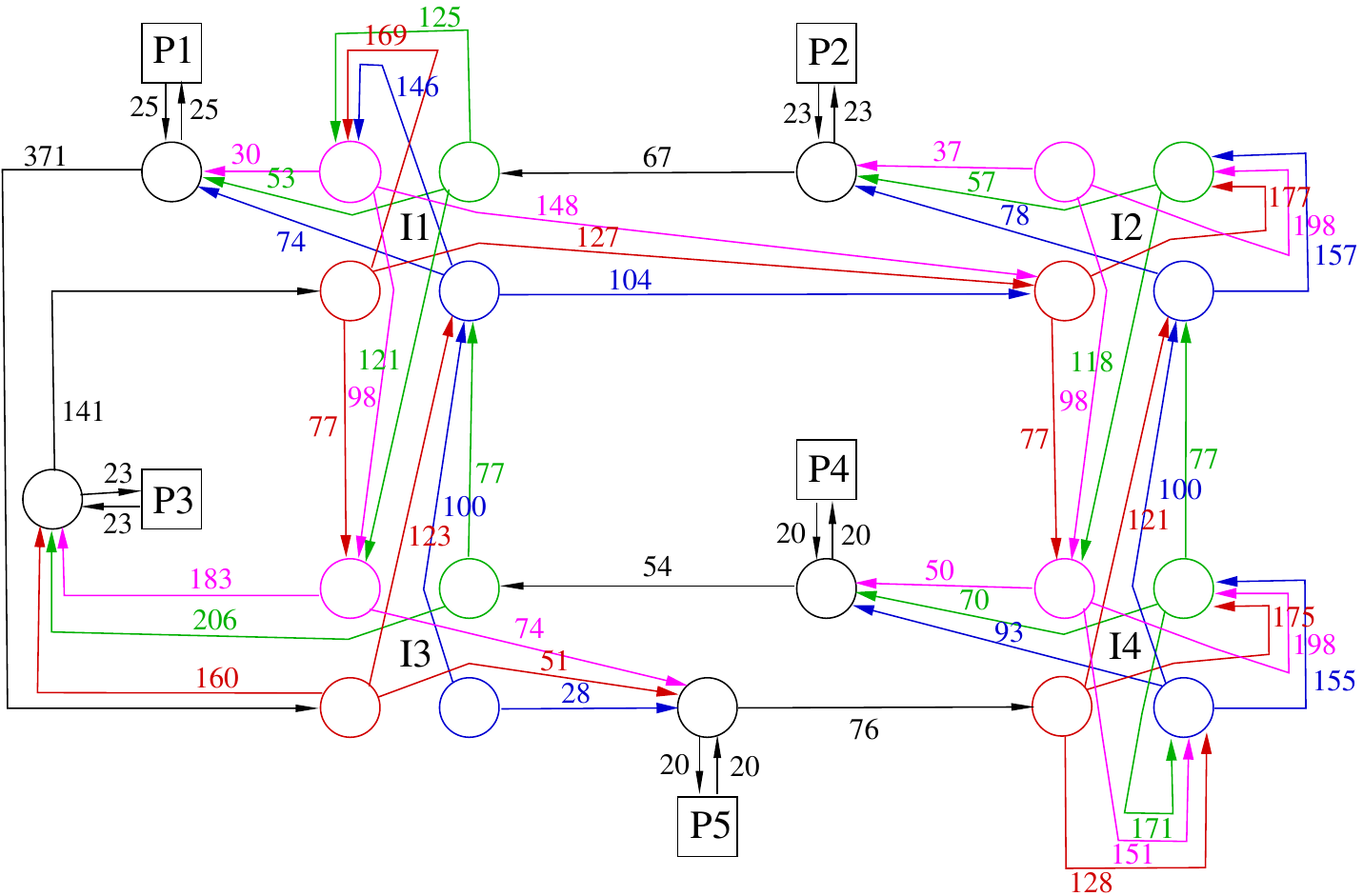_t}
}
\caption{The weighted transition system for the road network in
  Fig.~\ref{fig:road_network}.}
\label{fig:platform}
\end{figure}

In our experiment, we have considered the following task. Parking
spots $\mathrm{P2}$ and $\mathrm{P3}$ in Fig.~\ref{fig:platform} are
data upload locations (light shaded in Fig.~\ref{fig:trajectories})
and parking spots $\mathrm{P1}$, $\mathrm{P4}$, and $\mathrm{P5}$ are
data gather locations (dark shaded in
Fig.~\ref{fig:trajectories}). The optimizing proposition $\pi$ in LTL
formula~\eqref{eq:temp_spec} is $\pi := \mathrm{P2} \vee \mathrm{P3}$,
i.e. we want to minimize the time between data uploads. Both upload
locations provide the same service. On the other hand, data gather
locations are unique and provide the robot with different kind of
data. Assuming infinite runs of the robot in the environment, the
motion requirements can be specified as LTL formulas, where atomic
propositions are simply names of the parking spots. Namely, in the
formula $\varphi$ of the LTL formula~\eqref{eq:temp_spec}, we demand
the conjunction of the following:
\begin{itemize}\itemsep0ex
 \item The robot keeps visiting each data gather location.
$$\Always \, \Event \, \mathrm{P1} \wedge \Always \, \Event \, \mathrm{P4} \wedge \Always \, \Event \, \mathrm{P5}$$
\item Whenever the robot gathers data, it uploads it before doing
  another data gather.
$$\Always \, ((\mathrm{P1} \vee \mathrm{P4} \vee \mathrm{P5})\Rightarrow \Next \, (\neg(\mathrm{P1} \vee \mathrm{P4} \vee \mathrm{P5}) \, \Until \, (\mathrm{P2} \vee \mathrm{P3})))$$
\item Whenever the robot uploads data, it does not visit an upload
  location again before gathering new data.
$$\Always \, ((\mathrm{P2} \vee \mathrm{P3})\Rightarrow \Next \, (\neg (\mathrm{P2} \vee \mathrm{P3}) \, \Until \, (\mathrm{P1} \vee \mathrm{P4} \vee \mathrm{P5})))$$
\end{itemize}
Note that the above specifications implicitly enforce $\Always
\;\Event \; \pi$.  Running the \textsc{Optimal-Run} algorithm, we
obtain the solution as illustrated in the top three environment shots
in Fig.~\ref{fig:trajectories}. The transition system has 26 states,
and the B\"uchi automaton had 16 states, giving a product automaton
with 416 states.  In the product automaton, $F_\PA$ contained 52
states, and $S_{\PA,\opt}$ contained 32 states.  The
\textsc{Optimal-Run} algorithm ran in approximately 6~seconds on a
standard laptop.  The value of the cost function was 9.13 meters, which
corresponded to a robot travel time of 3.6
minutes 
(i.e., the maximum travel time between uploads was 3.6 minutes). Our
video submission displays the robot trajectory for this run and
Fig.~\ref{fig:snapshots} shows two snapshots from the video.
\begin{figure}
  \centering
  \includegraphics[width=0.48\linewidth]{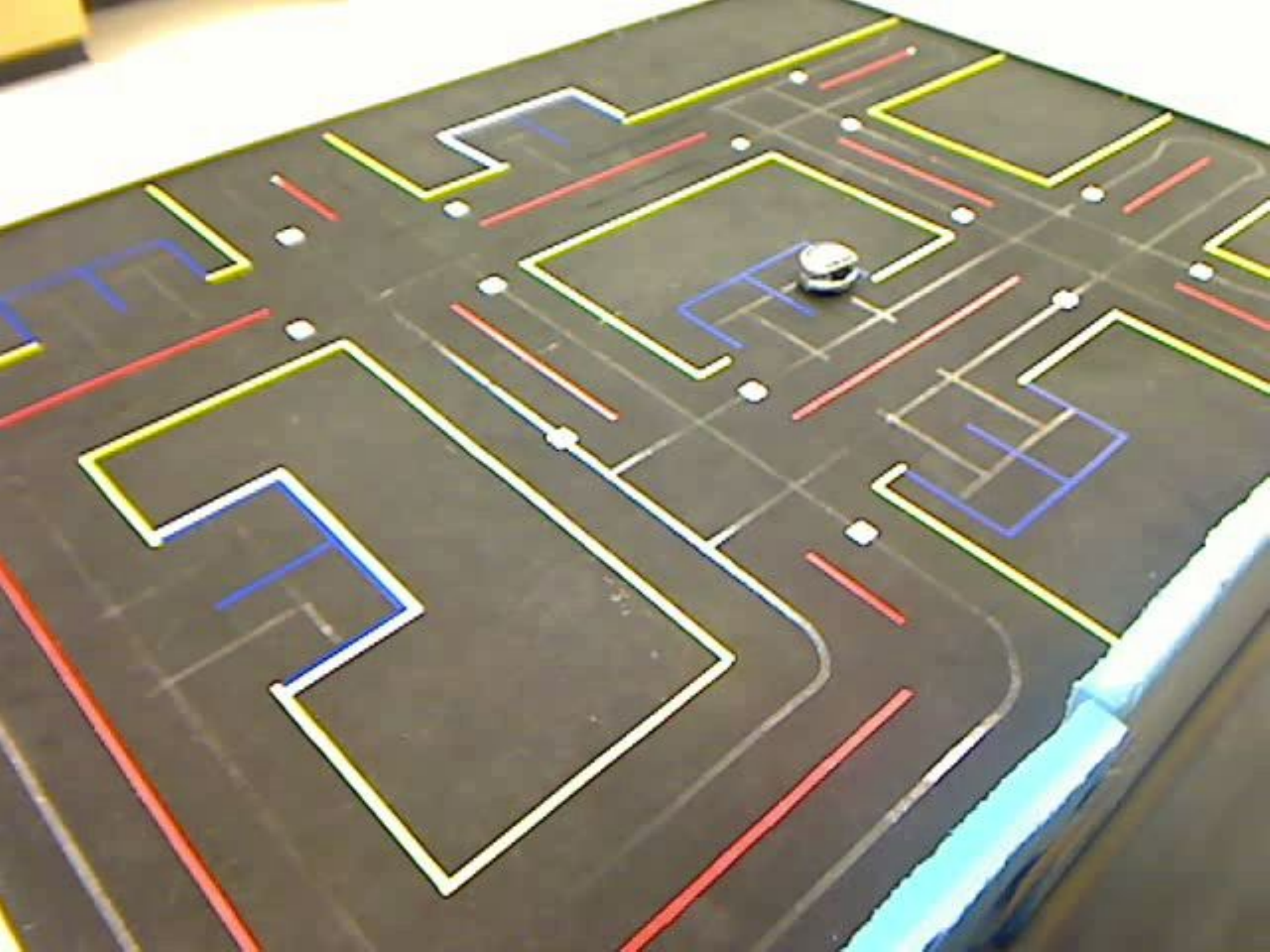}\hfill
    \includegraphics[width=0.48\linewidth]{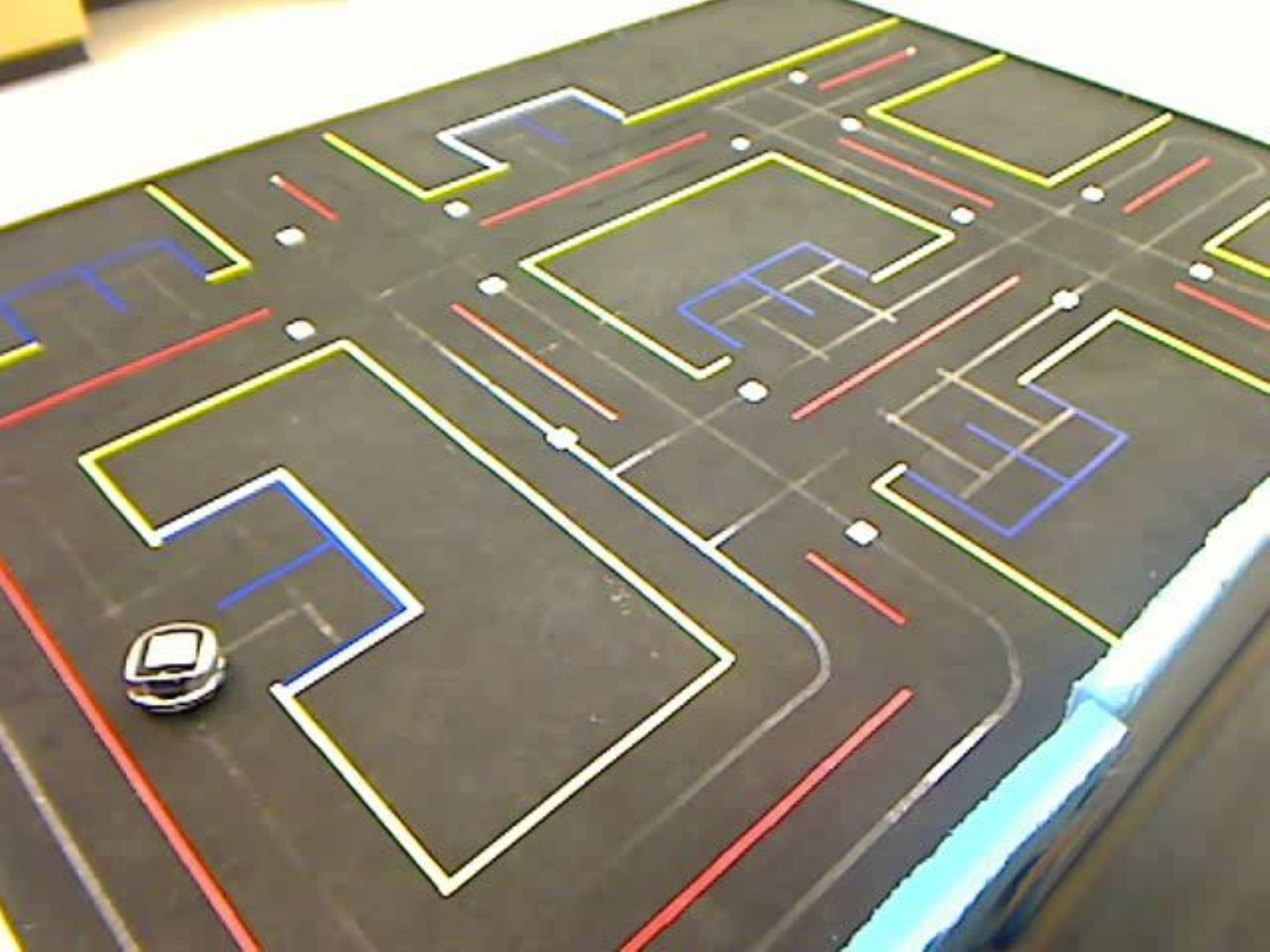}
    \caption{Two snapshots of the robot in road network.  In the left
      figure the robot is gathering data, and in the right figure it
      is about to upload.
    }
  \label{fig:snapshots}
\end{figure}

The bottom three shots in Fig.~\ref{fig:trajectories} illustrate the
situation with the same motion requirements and a further restriction
saying that the robot cannot upload data in $\mathrm{P2}$ after data
is gathered in location $\mathrm{P5}$: $\Always \, (\mathrm{P5}
\Rightarrow (\neg \mathrm{P2} \,\Until\, \mathrm{P3}))$. In this case
the B\"uchi automaton contained 29 states, the algorithm ran in 22
seconds, and the value of the cost function was 9.50 meters with a travel
time of 3.77 minutes.

\begin{figure}
\centering
\includegraphics[width=\linewidth]{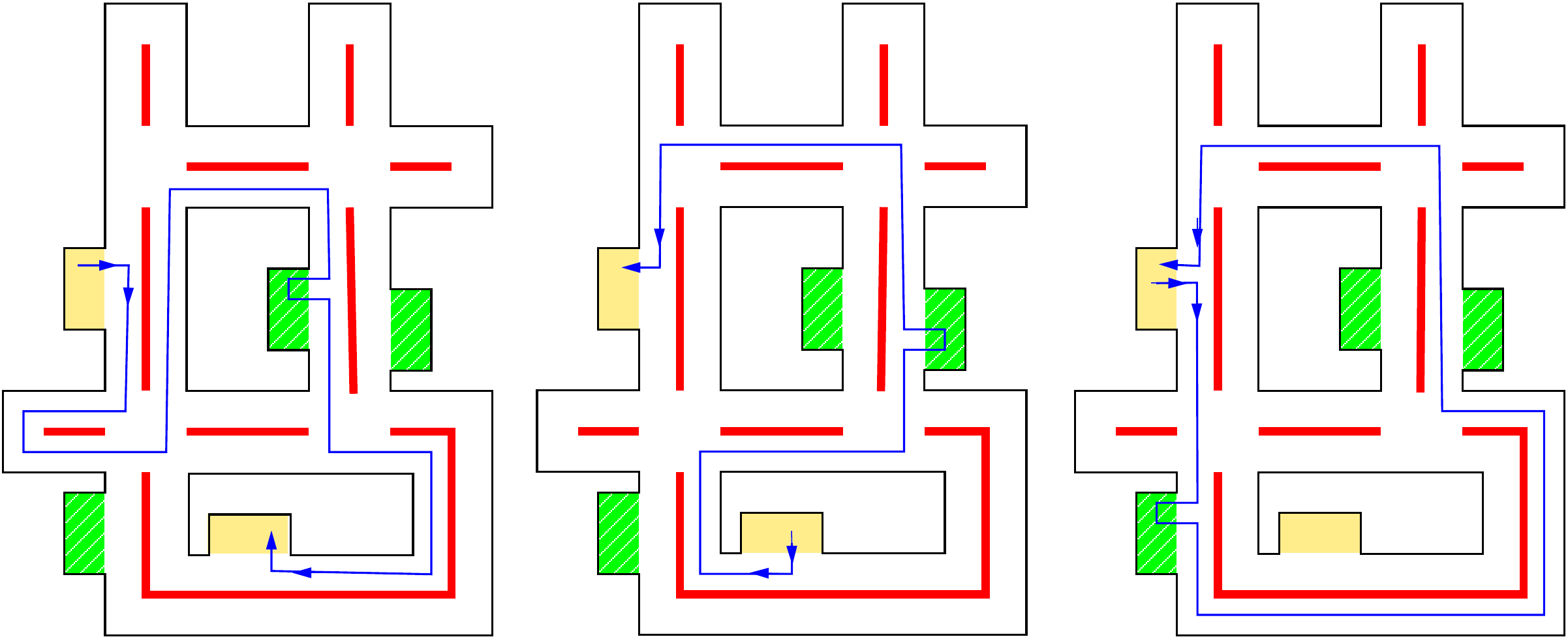} \\[2em]
\includegraphics[width=\linewidth]{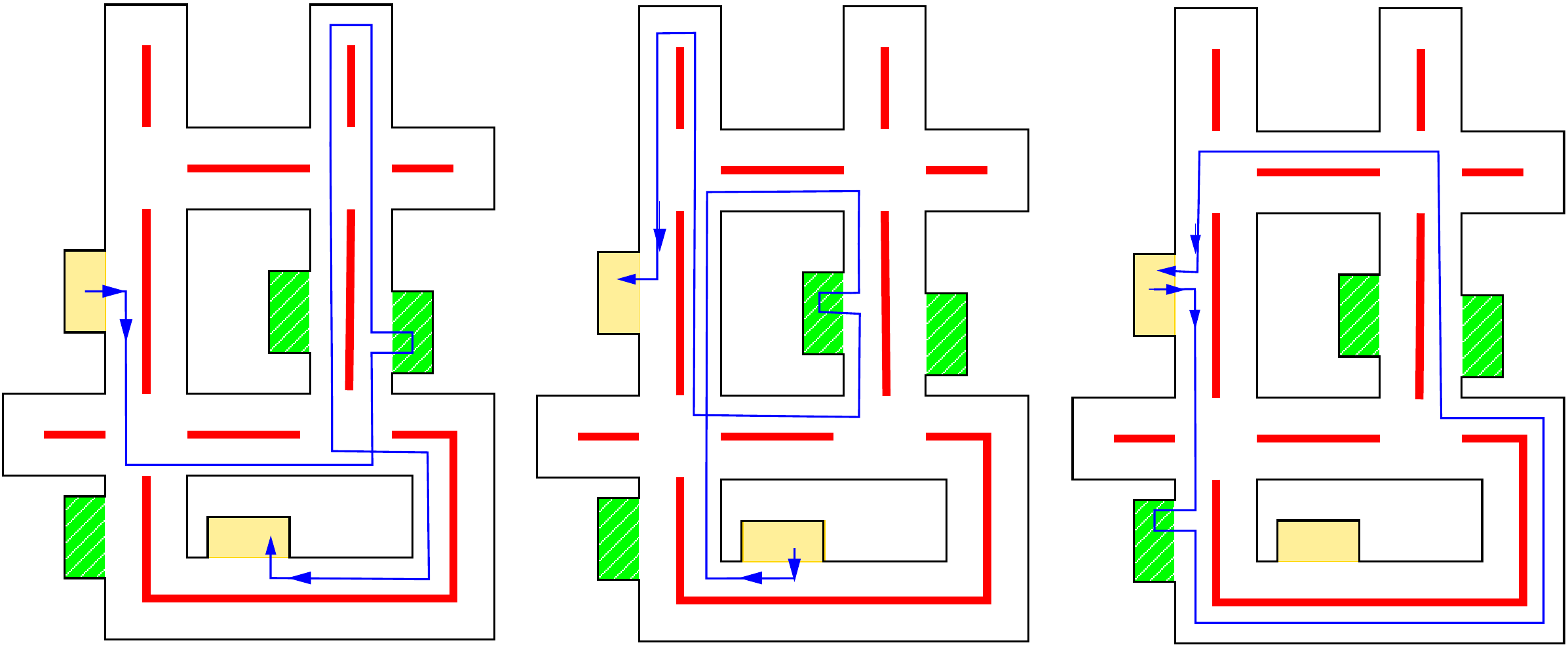} 
\caption{The robot trajectories (blue arrows) for the data gathering
  mission.  Green (dark shaded) areas are data-gathering locations,
  and yellow (light shaded) areas are upload locations.  The bottom
  three figures show the new robot trajectory when we restrict data
  upload to location $\mathrm{P3}$ (the bottom yellow location) after
  each data-gather at $\mathrm{P5}$ (the rightmost green
  location).}
\label{fig:trajectories}
\end{figure}

\section{Conclusions and Future Directions}
\label{sec:conc}

In this paper we presented a method for planning the optimal motion of
a robot subject to temporal logic constraints.  The problem is
important in applications where the robot has to perform a sequence of
operations subject to external constraints.  We considered temporal
logic specifications which contain a single \emph{optimizing
  proposition} that must be repeatedly visited.  We provided a method
for computing a valid robot trajectory that minimizes the maximum time
between satisfying instances of the optimizing proposition.  We
demonstrated our method for a robotic data gathering mission in a city
environment.

There are many promising directions for future work.  We are looking
at ways to extend the cost functions that can be optimized.  In
particular, we are looking at extensions to more general types of
patrolling problems.  Another interesting direction is the extension
to multiple robots.  This naturally leads to developing solutions that
are distributed among the robots.

\section*{Acknowledgements}
We thank Yushan Chen and Samuel Birch at Boston University for their
work on the road network platform.

\end{document}